\documentclass[11pt]{article}
\usepackage[margin=1in]{geometry}
\usepackage{amsmath,amssymb,amsthm}
\usepackage{algorithm}
\usepackage{algorithmic}
\usepackage{booktabs}
\usepackage[round]{natbib}
\usepackage{hyperref}
\usepackage{enumitem}
\usepackage{float}
\usepackage{multirow}
\usepackage{array}
\usepackage{graphicx}
\usepackage{tikz}
\usetikzlibrary{arrows.meta,positioning}
\usepackage[protrusion=true,expansion=false]{microtype}
\usepackage{subcaption}
\usepackage{setspace}
\usepackage[labelsep=period]{caption}
\usepackage{courier}
\usepackage{tabularx}
\usepackage{appendix}
\usepackage{makecell}
\usepackage{hyperref}

\hypersetup{colorlinks=true, linkcolor=blue, citecolor=blue, urlcolor=blue}

\newtheorem{theorem}{Theorem}
\newtheorem{proposition}{Proposition}

\newtheorem{definition}{Definition}
\newtheorem{remark}{Remark}

\DeclareMathOperator*{\argmin}{arg\,min}

\title{\large\bfseries Learning Auditable Classifier Models: Source-Disjoint Tree Ensembles}
\author{
	Srikumar Krishnamoorthy\\
	Information Systems Area, Indian Institute of Management Ahmedabad, India
}
\date{}

\begin{document}
	\maketitle
	
	\begin{abstract}
		Predictive models in clinical and regulated settings must be accurate and fully auditable. Tree ensembles deliver strong accuracy on tabular data, but their sequential boosting couples structure discovery with coefficient estimation, making compact per-prediction auditing difficult. Interpretable alternatives impose structural constraints that limit expressiveness: generalized additive models typically restrict interactions to pairwise terms and post-hoc rule extractors produce overlapping rules that hinder compact interpretation. We introduce Residual Pattern Tree Ensemble (RPTE), a three-stage learning approach, that is built on three key principles: bounded feature budget,  source disjointness, and separate coefficient estimation. Stage~1 builds a supervised symbolic feature vocabulary. Stage~2 grows shallow trees under a source-disjointness constraint, where each raw variable is allocated to at most one tree, and retains only the discovered tree structures. Stage~3 solves a single $\ell_1$-regularized logistic regression over leaf-region indicators, yielding jointly optimal sparse coefficients. This learning approach ensures that every prediction decomposes into an algebraic sum of named, non-overlapping rule contributions, enabling full auditability by design. Empirical evaluation on twelve clinical-domain binary classification benchmarks using repeated stratified 5-fold cross-validation shows that RPTE performs competitively against tuned opaque ensembles and interpretable baselines. RPTE reduces model inspection units by 9$\times$ to 87$\times$ relative to XGBoost and maintains lower audit complexity than EBM on all 12 datasets. RuleFit requires comparable or fewer inspection units on three datasets where its rule count is small, but without source-disjointness guarantees. The source code is available at \href{https://github.com/srikumar2050/hugiml-core}{this https URL}.
		
		\medskip
		\noindent\textbf{Keywords:} Interpretable machine learning, auditable tabular classification, tree ensembles, sparse logistic regression, clinical decision support.
	\end{abstract}
	
	\section{Introduction}
	\label{sec:intro}
	
	In regulated domains such as healthcare, finance, and insurance, model outputs inform decisions that directly affect individual outcomes. Predictive models in these settings must satisfy two competing requirements: strong classification accuracy and sufficient transparency for regulatory review. Regulatory frameworks increasingly mandate that these outputs be traceable to verifiable decision logic \citep{rudin2019stop, rudin2022interpretable}. A growing body of work formalizes what interpretability means in practice \citep{murdoch2019definitions, arrieta2020xai}: a model is interpretable when its predictions can be verified, audited, and explained without relying on secondary approximation tools. This creates a practical challenge: the most accurate tabular classifiers are often the least auditable.
	
	Gradient-boosted tree ensembles \citep{friedman2001greedy} are the dominant paradigm for tabular classification. Implementations such as XGBoost \citep{chen2016xgboost} and LightGBM \citep{ke2017lightgbm} achieve strong accuracy on clinical and scientific benchmarks. Recent large-scale comparisons confirm that tree-based methods remain competitive with or superior to deep learning on tabular tasks \citep{grinsztajn2022tree, shwartzziv2022tabular, fernandezdelgado2014classifiers}. These models construct hundreds of shallow decision trees in sequence, each fitted to the residual error of its predecessors. The sequential construction that produces strong accuracy also creates an obstacle for deployment in regulated settings. No individual prediction can be traced to a single, readable set of decision rules.
	
	Consider a gradient-boosted model with 200 trees of depth~4. A single prediction aggregates the outputs of all 200 trees, with the same variable potentially appearing repeatedly under different split thresholds across trees. A clinical reviewer verifying why a patient was flagged as high-risk must reconstruct the relevant decision paths across many trees. In a cardiac risk model, for example, the variable ``oldpeak'' (exercise-induced ST depression) might appear in 30 separate trees at 15 different thresholds, each interacting with different co-variables. The model's overall decision logic may span thousands of split conditions and does not yield a self-contained rule for that prediction. Post-hoc explanation methods can summarize this behavior but do not expose the model's decision logic in rule form. SHAP \citep{lundberg2017unified} assigns feature contributions using Shapley values, defined through changes in expected model output relative to a chosen background distribution. LIME \citep{ribeiro2016why} fits a simple surrogate model to approximate the predictor's behavior in a neighborhood of the instance being explained. Even when computed faithfully, these methods provide feature attribution or local approximation rather than a direct representation of the ensemble's underlying decision structure. In high-stakes settings, this distinction motivates the use of models whose decision rules can themselves be inspected, verified, and held accountable \citep{rudin2019stop, freitas2014comprehensible}.
	
	We argue that one important source of opacity in boosted tree ensembles is their stagewise construction. At each boosting round, a new tree structure is constructed and numerical leaf values are estimated conditional on the ensemble fitted thus far. Once a tree is added, its structure and leaf values are fixed rather than jointly re-estimated. This leads to two key problems. First, the resulting leaf coefficients are not jointly optimal conditional on the collection of structures ultimately discovered. Second, decision logic becomes fragmented across the ensemble: the same variable may appear repeatedly, at different thresholds and in interaction with different variables, across many trees.
	
	Existing interpretable methods address parts of this problem but leave important gaps. Generalized additive models such as Explainable Boosting Machines (EBMs; \citealp{lou2013accurate}; InterpretML, \citealp{nori2019interpretml}) impose structural transparency by representing the prediction as a sum of univariate effects and selected pairwise interactions.\footnote{The InterpretML implementation supports higher-order interactions when the user explicitly specifies them as tuples of feature indices. Automatic detection is limited to pairs.} This architectural restriction means that interactions involving more than two variables cannot be discovered automatically. In a breast cancer diagnosis model, for example, the joint effect of nuclear size, boundary irregularity, and texture variation, a three-way interaction supported by clinical cytopathology, lies outside the model's representational capacity. Additionally, EBMs produce per-feature shape functions with many bins, leading to high audit complexity on feature-rich datasets. On the WDBC dataset (30 features), EBM generates over 12,000 model inspection units, in our experiments (Section~\ref{sec:auditLoad}).
	
	Rule-based classifiers take a different approach. RuleFit \citep{friedman2008predictive} extracts path-based rules from an ensemble and estimates a sparse linear combination using $\ell_1$-regularized regression \citep{tibshirani1996lasso}, separating rule generation from coefficient estimation. However, candidate rules are generated through greedy univariate tree splits, and interactions whose constituent variables provide little marginal gain at early splits may be missed. Sparsification operates over rules rather than their constituent variables, so the same variable can appear across multiple selected rules. When 10 to 20 rules fire simultaneously per prediction, with shared variables and potentially conflicting coefficient signs, the resulting model hinders compact interpretation. Ordered rule lists such as CORELS \citep{angelino2018corels} and Bayesian rule lists \citep{letham2015interpretable} avoid this simultaneous-activation problem, but their interpretability depends on list length, and the operational meaning of each rule depends on all preceding rules having failed. Sparse scoring systems \citep{ustun2016supersparse} achieve strong interpretability but are limited to linear decision boundaries.
	
	In this study, our primary motivation is to address the two identified problems through the Residual Pattern Tree Ensemble (RPTE), a \emph{three-stage learning} approach built around three key principles: (a) bounded vocabulary of symbolic features subject to a user-defined budget, (b) source-disjointness constraint on the discovered tree structures, and (c) separation of structure discovery from coefficient estimation. In the \emph{first stage}, a supervised pattern miner constructs an interpretable symbolic feature vocabulary. In the \emph{second stage}, shallow trees are grown sequentially against the current boosting residuals, subject to a constraint that reserves each raw variable for at most one accepted tree. Each tree is assigned temporary leaf values for residual updates, but these values are discarded once its structure is accepted. In the \emph{third stage}, terminal-leaf membership indicators from all accepted trees, together with eligible direct terms, form a common design matrix. A single $\ell_1$-regularized logistic regression estimates their coefficients jointly. This final refitting allows structures discovered early in the boosting process to receive small or zero coefficients once all discovered structures are considered jointly. The source-disjointness constraint further prevents distinct trees from encoding competing threshold structures over the same raw variables, thereby limiting the proliferation of overlapping rules that can arise in unconstrained ensembles. In order to assess the strengths of the proposed approach, we investigate the following key research questions:
	
	\begin{enumerate}[label=(RQ\arabic*),leftmargin=3.5em]
		\item Does separating structure discovery from coefficient estimation while enforcing source disjointness preserve competitive predictive accuracy relative to tuned opaque ensembles and interpretable baselines?
		\item What quantifiable reduction in audit complexity does the proposed approach achieve while maintaining this predictive accuracy, and does this reduction hold consistently across datasets with different characteristics?
	\end{enumerate}
	
	To address these research questions, our study makes the following contributions:
	\begin{enumerate}[label=(\arabic*),leftmargin=2.5em]
		\item We formalize RPTE as a three-stage learning approach and identify its three defining architectural principles: bounded feature budget, source disjointness, and separation of structure discovery from coefficient estimation. Together, these distinguish RPTE from standard boosting, interpretable baselines, and post-hoc rule extraction methods.
		\item We characterize five structural properties of RPTE and establish corresponding complexity bounds, providing a formal foundation for its auditability.
		\item We report an empirical evaluation on twelve clinical-domain binary classification datasets against three ensemble methods and two interpretable baselines. The evaluation includes pattern-level interpretability analysis, cross-fold and same-fold Rashomon analysis, eight ablation experiments, and a robustness study of strict source disjointness.
	\end{enumerate}
	
	The remainder of this paper is organized as follows. Section~\ref{sec:related} reviews related work. Section~\ref{sec:prelim} introduces notation and audit-load metrics. Section~\ref{sec:framework} presents the three stage learning approach. Section~\ref{sec:theory} establishes theoretical properties and complexity bounds. Section~\ref{sec:experiments} reports predictive performance, audit-load reduction, and ablation results. Section~\ref{sec:discussion} presents the interpretability analysis, Rashomon findings, practical implications, and limitations. Section~\ref{sec:conclusion} concludes with directions for future research.

	\section{Related Work}
	\label{sec:related}
	This section reviews four lines of research that bear on the problem of simultaneously achieving predictive accuracy and auditability in tabular classification. 
	
	\textit{Gradient-boosted ensembles.}
	Gradient boosting \citep{friedman2001greedy, friedman2000additive} constructs an additive model by sequentially fitting shallow trees to pseudo-residuals. $L_2$-boosting theory \citep{buhlmann2003boosting} establishes convergence for functional gradient descent. XGBoost \citep{chen2016xgboost} adds second-order gradient information and regularization. LightGBM \citep{ke2017lightgbm} introduces gradient-based one-side sampling. Random Forests \citep{breiman2001random} aggregate many independently randomized deep trees using bagging and random feature selection. These tree-ensemble methods share a structure-estimation coupling issue: each tree's leaf predictions are estimated locally and remain fixed thereafter. The proposed approach decouples these stages - discarding leaf predictions after structure discovery and re-estimating all leaf values jointly in a single convex program.
	
	\textit{Additive interpretable models.}
	Generalized additive models restrict each term to one variable. Explainable Boosting Machines (EBMs; \citealp{lou2013accurate}; InterpretML, \citealp{nori2019interpretml}) extend this to pairwise interactions via boosted CART. Automatic interaction detection is limited to pairs; higher-order terms require explicit user specification. Neural additive models \citep{agarwal2021nam} replace shape functions with small networks. These methods are transparent by construction but structurally limited. The proposed approach discovers interactions of arbitrary order through a pairwise-composition search (Section~\ref{sec:framework}), while the source-disjointness constraint ensures auditability.
	
	\textit{Rule-based interpretable classifiers.}
	RuleFit \citep{friedman2008predictive} extracts conjunctive rules from a Random Forest and applies lasso regression to leaf-path indicators. The differences from the proposed approach are threefold. First, RuleFit's rules inherit the marginal-screening limitation of CART splits, while the pairwise-composition search in Stage~2 discovers higher-order interactions at polynomial cost. Second, RuleFit imposes no source-disjointness constraint, so the same variable can appear in multiple rules. Third, RuleFit's lasso operates on indicators weighted by locally estimated leaf values, while the proposed approach operates on structural indicators with reassigned coefficients. FIGS \citep{tan2022figs} grows a small set of trees simultaneously but permits variable reuse across trees, sacrificing exact source-level decomposability. SIRUS \citep{benard2021sirus} stabilizes rule extraction by frequency thresholding but inherits the variable overlap of the underlying forest. Although each individual rule in these methods is readable, the fitted collections often hinders compact interpretation. Multiple rules can fire on the same observation with conflicting coefficient signs. The number of simultaneously active rules per prediction can reach 20 or more. Appendix~\ref{app:critique} presents an empirical analysis of these patterns.
	
	\textit{Ordered rule lists and sparse scoring systems.}
	A separate family represents predictions as ordered rule lists. CORELS \citep{angelino2018corels} uses branch-and-bound search to find a globally optimal rule list. SamRuLe \citep{pellegrina2024samrule} addresses computational cost by learning from a statistically selected sample. QCBA \citep{kliegr2023qcba} post-processes rule classifiers to recover information lost during discretization. Bayesian rule lists \citep{letham2015interpretable} generate decision lists via a pre-mined set of rules. Sparse scoring systems \citep{ustun2016supersparse} restrict coefficients to small integers for point-of-care use. While ordered rule lists avoid the simultaneous-activation problem, their interpretability depends on list length, antecedent complexity, and ordering dependence. The proposed approach takes a different path: rather than imposing rule ordering, it eliminates variable overlap through source disjointness.
	
	The idea of constructing a structural basis and then fitting a sparse model over it has precedents beyond RuleFit. Sparse additive models \citep{hastie2009elements} estimate components jointly under an $\ell_1$ penalty. \citet{dembczynski2008maximum} propose maximum likelihood rule ensembles. \citet{cohen1999simple} extract rules from decision lists via sequential covering. Optimal classification trees \citep{bertsimas2017optimal} and fast Rashomon sets \citep{molero2026rashomon} enumerate good models from the Rashomon set of near-optimal classifiers \citep{semenova2022existence, xin2022rashomon}. The proposed approach differs in two respects: the structural basis is built by residual-guided sequential search with a source-disjointness constraint, and leaf values are discarded entirely so that Stage~3 operates on structural indicators. The supervised pattern-mining component (Stage~1) relates to itemset mining \citep{agrawal1994fast} and its supervised extensions \citep{krishnamoorthy2024hugiml}, but the three-stage approach is not tied to any specific Stage~1 implementation.

	\begin{table}[h]
		\centering\small
		\caption{Comparison of structural properties across tabular classification methods.}
		\label{tab:comparison}
		\renewcommand{\arraystretch}{1.15}
		\setlength{\tabcolsep}{1.25pt}
		\begin{tabularx}{\textwidth}{
				@{}
				>{\raggedright\arraybackslash}p{3.5cm}
				*{7}{>{\centering\arraybackslash}X}
				@{}
			}
			
			\toprule
			Property
			& RPTE & RuleFit & EBM & FIGS & SIRUS & CORELS & XGBoost \\
			\midrule
			
			Source disjointness
			& Enforced
			& No
			& No
			& No
			& No
			& No
			& No \\
			
			Exact source decomposition
			& Yes
			& No
			& Term wise
			& No
			& No
			& N/A
			& No \\
			
			Separate structure/ weight fit
			& Yes
			& Yes
			& No
			& Optional refit
			& Fixed average
			& N/A
			& No \\
			
			Discovered interactions
			& \texttt{max depth}
			& Tree size
			& Pairwise$^*$
			& \texttt{max depth}
			& \texttt{max depth}
			& Pre-mined
			& \texttt{max depth} \\
			
			Bounded audit load
			& Hard
			& L1/soft
			& Interaction cap
			& Hard split cap
			& Hard rule cap
			& $\lambda$-implied
			& Indirect \\
			
			Native contribution unit
			& Leaf/ direct term
			& Rule/ linear term
			& Term score
			& Leaf
			& Rule output
			& First matching rule
			& Leaf \\
			
			\bottomrule
		\end{tabularx}
		\smallskip
		{\scriptsize
			\hspace*{-2em}$^*$EBM automatically selects pairwise interactions; higher-order
			interactions may be specified explicitly.
		}
	\end{table}
	
	Table~\ref{tab:comparison} distinguishes structural properties that affect how a fitted model can be inspected and audited. Source disjointness denotes an explicit construction constraint preventing an underlying source from being reused across independently fitted structural components. Exact source decomposition is stronger than algebraic additivity: contributions must correspond to non-redundant, source-owned components. Hence a model may expose exact native contributions without satisfying exact source decomposition; for example, overlapping RuleFit rules or FIGS trees may contribute in opposing directions to nearly the same observations (see Appendix~\ref{app:critique}). Separate structure/weight fit distinguishes methods that discover a representation before estimating its final coefficients from methods that learn structure and prediction values jointly; fixed aggregation and optional post-structure refitting are indicated separately. Bounded audit load characterizes whether and how the number of auditable structural units is controlled, distinguishing direct hard limits from sparsity penalties, regularization-implied bounds, or indirect complexity controls. Finally, native contribution unit reports the model component at which a prediction can be inspected directly; it does not imply unique raw-feature attribution.
	
	Viewed collectively, these structural differences expose three gaps in prior work. First, no existing method combines source disjointness with separated estimation to guarantee an exact, source-disjoint per-prediction decomposition. Second, existing interpretable model families do not jointly provide automatic discovery of higher-order interactions at polynomial cost while avoiding a fixed low-order interaction cap. Third, although individual methods can achieve low audit load or strong predictive performance, none provides formal bounds that tie audit load directly to user-specified structural budgets while retaining competitive predictive accuracy.
	
	\section{Preliminaries and Notations}
	\label{sec:prelim}
	
	Let $X \in \mathbb{R}^{n \times p_0}$ denote the raw input matrix with $n$ observations and $p_0$ variables, and let $y \in \{0,1\}^n$ denote the binary labels. Let $\mathcal{R}_0 = \{1, \ldots, p_0\}$ denote the set of raw variable indices. The logistic function is $\sigma(z) = 1/(1+e^{-z})$, and the binomial deviance loss used throughout is:
	\begin{equation}
		\mathcal{L}(F) = \frac{2}{n}\sum_{i=1}^{n}
		\bigl[-y_i \log \sigma(F_i) - (1-y_i)\log(1-\sigma(F_i))\bigr],
		\label{eq:deviance}
	\end{equation}
	where $F_i$ is the log-odds prediction for observation~$i$.
	
	\subsection{Feature vocabulary}
	\label{sec:vocab}
	
	A feature vocabulary $\mathcal{H}$ is constructed from the raw variables through up to three sources:
	\begin{enumerate}[label=(\roman*),topsep=2pt,itemsep=1pt]
		\item \textbf{Raw features:} the original $p_0$ columns of $X$.
		\item \textbf{Mined patterns:} up to $K$ binary indicator columns, each encoding a conjunctive condition over at most two raw variables. Any supervised pattern mining procedure (e.g., utility-based mining, frequent pattern mining) may serve as the constructor.
		\item \textbf{Augmented pairs} (optional): up to $K$ continuous columns computed as pairwise arithmetic transforms of raw variables (e.g., $|x_a - x_b|$, $x_a \cdot x_b$).
	\end{enumerate}
	The user-specified feature budget $K$ bounds the mined pattern and augmented pair counts independently. The total vocabulary size is
	\begin{equation}
		V \;=\; |\mathcal{H}| \;\leq\; p_0 + 2K.
		\label{eq:vocabsize}
	\end{equation}
	
	\begin{definition}[Source ownership]
		\label{def:source}
		Each column $j \in \mathcal{H}$ carries a source set $\omega(j) \subseteq \mathcal{R}_0$ with $|\omega(j)| \leq 2$, identifying the raw variables from which it is derived. A raw feature $k$ has $\omega(k)=\{k\}$; a mined pattern or augmented pair derived from variables $a$ and $b$ has $\omega(j) = \{a,b\}$. Two columns are \emph{source-disjoint} when $\omega(j) \cap \omega(j') = \emptyset$.
	\end{definition}
	
	When cross-validation is used, vocabulary construction is performed inside each training fold to prevent label leakage.
	
	\subsection{Structural parameters}
	\label{sec:params}
	
	Three parameters govern model construction: the maximum number of boosting rounds $T$, the maximum number of leaves per tree $L_{\max}$, and the maximum tree depth $D_{\max}$. After fitting, $M \leq T$ trees are accepted; source disjointness (Section~\ref{sec:framework}) further tightens this to $M \leq \min(T, p_0)$, since each accepted tree claims at least one raw variable. The set $D \subseteq \mathcal{H}$ consists of vocabulary columns not used by any accepted tree.
	
	\subsection{Audit-load metrics}
	\label{sec:audit-metrics}
	
	To quantify the audit complexity that any classification model produces, we define three levels of measurement that apply uniformly across all model architectures.
	
	\begin{definition}[Audit load]\label{def:audit}~
		\begin{enumerate}[label=(\roman*),topsep=2pt,itemsep=1pt]
			\item \textbf{Model units (MU):} non-zero coefficients in the final model.
			\item \textbf{Model inspection units (MIU):} total split conditions across all active leaf paths plus active direct terms, i.e., the number of conditions a reviewer must read to audit the complete model.
			\item \textbf{Instance inspection units (IIU):} mean conditions consulted per prediction over test instances, i.e., the per-prediction audit cost.
		\end{enumerate}
	\end{definition}
	
	MIU is the primary model-level metric. These metrics are computed for all methods using the same counting procedure. We note that MIU and IIU measure auditor workload (conditions to verify), but they do not capture cognitive difficulty of individual conditions or domain familiarity.
	
	\begin{table}[!t]
		\caption{Summary of notation.}
		\label{tab:notation}
		\begin{tabular}{@{}cl@{}}
			\toprule
			Symbol & Description \\
			\midrule
			$n, p_0$ & Number of observations and raw variables \\
			$K$ & User-specified feature budget \\
			$\mathcal{H}$ & Feature vocabulary, $V = |\mathcal{H}| \leq p_0 + 2K$ \\
			$\omega(j)$ & Source set of column $j$, $\omega(j) \subseteq \mathcal{R}_0$, $|\omega(j)| \leq 2$ \\
			$T$ & Maximum boosting rounds \\
			$L_{\max}$ & Maximum leaves of tree \\
			$D_{\max}$ & Maximum depth of tree \\
			$M$ & Number of accepted trees, $M \leq \min(T, p_0)$ \\
			$D$ & Set of unused vocabulary columns (direct terms) \\
			$q$ & Design matrix width, $q \leq M \cdot L_{\max} + |D|$ \\
			$\rho$ & Ridge parameter for Newton leaf values \\
			$\eta_0$ & Initial learning rate for backtracking \\
			$\lambda_1$ & $\ell_1$ regularization parameter \\
			\bottomrule
		\end{tabular}
	\end{table}
	
	\smallskip
	Table~\ref{tab:notation} summarizes the notation used throughout.
	
	\section{Methodology}
	\label{sec:framework}
	This section presents the three-stage learning approach and its concrete realization, RPTE (Residual Pattern Tree Ensemble).
	
	\subsection{Architecture overview}
	
	Standard gradient boosting merges three distinct questions: what primitives should the learner use, how should those primitives be composed into tree structures, and how much predictive weight should each structure receive. The proposed three-stage approach separates these questions explicitly.
	
	\begin{center}
		\begin{tikzpicture}[
			node distance=0.9cm and 1.6cm,
			box/.style={rectangle, draw=black, rounded corners=4pt,
				minimum width=3.4cm, minimum height=1.0cm,
				font=\small, align=center, thick},
			arr/.style={-{Latex[length=3.5mm]}, thick, line width=1.2pt}
			]
			\node[box]  (S1) {Stage 1\\Vocabulary};
			\node[box, right=of S1] (S2) {Stage 2\\Structure Discovery};
			\node[box, right=of S2]  (S3) {Stage 3\\Global Estimation};
			\draw[arr]   (S1) -- (S2);
			\draw[arr] (S2) -- (S3);
			\node[font=\scriptsize, below=0.5cm of S1, align=center]
			{Supervised mining\\provenance-annotated columns};
			\node[font=\scriptsize, below=0.5cm of S2, align=center]
			{Source-disjoint trees\\leaf values discarded};
			\node[font=\scriptsize, below=0.5cm of S3, align=center]
			{$\ell_1$ logistic regression\\jointly optimal coefficients};
		\end{tikzpicture}
	\end{center}
	
	\textbf{Stage~1} answers: \emph{what primitives may the learner use?}
	A supervised procedure constructs the feature vocabulary $\mathcal{H}$ with $V \leq p_0 + 2K$ columns, as described in Section~\ref{sec:vocab}. Each column carries a provenance record $\omega(j) \subseteq \mathcal{R}_0$. The subsequent stages are agnostic to the choice of mining algorithm.
	
	\textbf{Stage~2} answers: \emph{how should the primitives be composed?}
	Shallow trees grow sequentially against pseudo-residuals of the binomial deviance~\eqref{eq:deviance}. At each round~$m$, the pseudo-residuals and Newton weights are
	\begin{equation}
		r_i = y_i - \sigma(F_i), \qquad w_i = \sigma(F_i)\bigl(1 - \sigma(F_i)\bigr).
		\label{eq:residuals}
	\end{equation}
	A tree $h_m$ is grown using only columns whose raw sources remain unclaimed. Each leaf $\ell$ receives a Newton step
	\begin{equation}
		v_\ell = \frac{\sum_{i \in \ell} r_i}{\sum_{i \in \ell} w_i + \rho},
		\label{eq:newton}
	\end{equation}
	where $\rho > 0$ is a ridge parameter. A backtracking line search finds $\eta_m \leq \eta_0$ satisfying $\mathcal{L}(F_{m-1} + \eta_m h_m) < \mathcal{L}(F_{m-1})$, starting from an initial rate $\eta_0$ and halving up to $B_{\mathrm{ls}}$ times. On acceptance, the raw sources $\bigcup_{j \in \mathrm{used}(h_m)} \omega(j)$ are removed from the eligible pool. The temporary leaf values are then discarded; only the tree structure (the partition into leaf regions) is retained.
	
	\textbf{Stage~3} answers: \emph{how much predictive weight should each structure receive?}
	Leaf-membership indicators from all $M$ accepted trees, together with any unused vocabulary columns $D$, form a design matrix $Z \in \{0,1,\mathbb{R}\}^{n \times q}$, where $q \leq M \cdot L_{\max} + |D|$. A single $\ell_1$-penalized logistic regression assigns jointly optimal coefficients:
	\begin{equation}
		(\beta^*, \gamma^*) = \argmin_{\beta,\gamma}
		\bigl[\mathcal{L}(\beta,\gamma;Z,y) + \lambda_1(\|\beta\|_1 +
		\|\gamma\|_1)\bigr],
		\label{eq:stage3}
	\end{equation}
	where $\beta$ collects the leaf-region coefficients and $\gamma$ collects the direct-term coefficients. The $\ell_1$ penalty \citep{tibshirani1996lasso} drives a fraction of coefficients to zero, further sparsifying the model. The resulting prediction for any input $x$ decomposes as
	\begin{equation}
		F(x) = \beta_0
		+ \underbrace{\sum_{t=1}^{M} \beta_{t,\ell_t(x)}}_{\text{tree contributions}}
		+ \underbrace{\sum_{j \in D} \gamma_j \phi_j(x)}_{\text{direct terms}},
		\label{eq:decomp}
	\end{equation}
	where $\ell_t(x)$ is the unique leaf of tree~$t$ reached by~$x$ and $\phi_j(x)$ is the value of vocabulary column $j$ for input $x$.
	
	\subsection{Design principles}
	
	The approach is grounded in three design principles that together constrain model construction and ensure that the resulting model is auditable without post-hoc attribution tools.
	
	\textbf{Bounded feature budget (P1).}
	The vocabulary size $V \leq p_0 + 2K$ is bounded by the user-specified budget $K$ (Equation~\ref{eq:vocabsize}). Combined with the tree parameter bounds ($T$ rounds, $L_{\max}$ leaves, $D_{\max}$ depth), the maximum audit load can be computed before training begins (Theorem~\ref{thm:complexity}). Unlike opaque ensembles, where model complexity is an emergent property of the training process, the audit cost here is deterministic.
	
	\textbf{Source disjointness (P2).}
	Each raw variable is owned by at most one tree. Once a tree uses a column whose provenance includes variable $k$, no subsequent tree may use any column derived from $k$. This forces the model to allocate each raw variable to one coherent structural role and limits the ensemble to at most $p_0$ trees. A direct consequence is exact decomposability: every prediction is an algebraic sum of terms with non-overlapping raw-variable sources, as expressed in Equation~\eqref{eq:decomp}. A reviewer can verify any prediction by reading each active leaf path and summing the corresponding coefficients.
	
	\textbf{Separated coefficient estimation (P3).}
	Structure discovery (Stage~2) and coefficient estimation (Stage~3) are fully separated. Stage~2 uses Newton boosting with temporary leaf values~\eqref{eq:newton} to discover tree structures. After each tree is accepted, the temporary leaf values are discarded. Stage~3 then assigns each leaf region a coefficient that is jointly optimal over all discovered structures and unused vocabulary columns. This ensures that locally estimated coefficients from early boosting rounds, when residuals differ substantially from the final residuals, do not persist into the final model.
	
	\begin{algorithm}[h]
		\caption{{\fontfamily{pcr}\selectfont Residual Pattern Tree Ensemble (RPTE)}}
		\label{alg:fit}
		\begin{algorithmic}[1]
			\REQUIRE Raw data $X\in\mathbb{R}^{n\times p_0}$, labels $y$,
			feature budget $K$, structural bounds
			$(T{=}10,L_{\max}{=}12,D_{\max}{=}8)$,
			hyperparameters $(\eta_0{=}0.3,\lambda_1)$
			\ENSURE Tree structures and final coefficients $(\beta^*,\gamma^*)$
			
			\STATE \textbf{Stage 1:}
			$(\mathcal{H},\omega)\gets$
			{\fontfamily{pcr}\selectfont ConstructVocabulary}$(X,y,K)$
			\COMMENT{Supervised pattern mining}
			
			\STATE $F\gets\log(\bar y/(1-\bar y))\cdot\mathbf{1}$;\;
			$\mathrm{avail}\gets\mathcal{R}_0$;\;
			$\mathrm{trees}\gets\emptyset$;\;
			$m\gets1$;\;
			$\mathrm{success}\gets\mathrm{true}$
			
			\STATE \textbf{Stage 2:} Constrained sequential structure discovery
			\WHILE{$m\leq T$ \AND $\mathrm{success}$}
			\STATE $\mathrm{success}\gets\mathrm{false}$
			\STATE $r_i\gets y_i-\sigma(F_i)$;\;
			$w_i\gets\sigma(F_i)(1-\sigma(F_i))$;\;
			$z_i\gets\mathbb{I}(r_i>0)$;\;
			$\eta_m\gets\eta_0$; $\mathcal{E}\gets
			\{j\in\mathcal{H}:\omega(j)\subseteq\mathrm{avail}\}$
			
			\IF{$\mathcal{E}\neq\emptyset$ \AND $z$ is nonconstant}
			\STATE Fit a tree $h_m$ using the columns in $\mathcal{E}$,
			with targets $z$ and subject to $L_{\max}$ and $D_{\max}$
			
			\IF{$h_m$ has a valid split}
			\STATE Compute Newton leaf values via~\eqref{eq:newton}
			\STATE Repeatedly halve $\eta_m$ until
			$F+\eta_mh_m$ decreases binomial deviance
			(at most $B_{\mathrm{ls}}{=}6$ trials)
			
			\IF{a decreasing step was found}
			\STATE $F\gets F+\eta_mh_m$;\;
			$\mathrm{avail}\gets
			\mathrm{avail}\setminus
			\displaystyle\bigcup_{j\in\mathrm{used}(h_m)}\omega(j)$
			\STATE Retain the structure of $h_m$ for the final representation
			\STATE $\mathrm{trees}\gets\mathrm{trees}\cup\{h_m\}$;\;
			$m\gets m+1$;\;
			$\mathrm{success}\gets\mathrm{true}$
			\ENDIF
			\ENDIF
			\ENDIF
			\ENDWHILE
			
			\STATE \textbf{Stage 3:} Global coefficient estimation
			\STATE $D\gets
			\{j\in\mathcal{H}:
			j\notin\displaystyle\bigcup_{h\in\mathrm{trees}}\mathrm{used}(h)\}$
			
			\STATE Build $Z$ from leaf indicators of $\mathrm{trees}$ and columns in $D$
			
			\STATE Solve~\eqref{eq:stage3} via $\ell_1$-regularized logistic
			regression for $(\beta^*,\gamma^*)$
			
			\RETURN tree structures, $\beta^*$, $\gamma^*$
		\end{algorithmic}
	\end{algorithm}
	
	\subsection{Algorithm pseudo-code}
	
	Algorithm~\ref{alg:fit} summarizes the complete RPTE learning process in three stages. Lines~1--2 construct the supervised vocabulary $\mathcal{H}$, retain its raw-source mapping $\omega$, and initialize the additive score and available-source set. Lines~3--19 perform sequential structure discovery. At each round, RPTE computes logistic residuals and Newton weights, forms residual-sign targets, and fits a depth- and leaf-constrained tree using only columns whose raw sources remain available. Newton leaf values are used provisionally to determine, through backtracking, whether the candidate tree decreases binomial deviance. An accepted tree updates the additive score, and every raw source used by that tree is removed from the pool available to subsequent trees. If no eligible split or deviance-reducing step is found, the success condition remains false and tree construction terminates. Finally, Lines~20--24 form a representation from the retained tree-leaf indicators and unused direct features, and jointly estimate their final coefficients using $\ell_1$-regularized logistic regression. Thus, the stage-wise Newton values guide structure discovery but are not carried into the final coefficient model.
	
	\section{Theoretical Analysis}
	\label{sec:theory}
	This section establishes structural properties of RPTE and derives complexity bounds.

	\begin{proposition}[Monotone descent]
		\label{prop:mono}
		The training deviance is strictly decreasing across Stage~2 rounds: $\mathcal{L}(F_m) < \mathcal{L}(F_{m-1})$ for every accepted round $m$.
	\end{proposition}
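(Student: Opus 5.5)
The plan is to read the monotonicity off the acceptance rule in Algorithm~\ref{alg:fit}, not off any property of the tree fit. The key observation is that a round $m$ counts as \emph{accepted} (so that $F_m$ is defined and $m$ is incremented) only on the branch where the backtracking loop has found a step size $\eta_m \in \{\eta_0, \eta_0/2, \ldots, \eta_0/2^{B_{\mathrm{ls}}}\}$ with $\mathcal{L}(F_{m-1} + \eta_m h_m) < \mathcal{L}(F_{m-1})$. In that case the update is $F_m = F_{m-1} + \eta_m h_m$, so the inequality is exactly the claim. If no such step exists, \textrm{success} stays false, $F$ is left unchanged, and no round is recorded. I would therefore first fix notation: $F_0$ is the intercept-only initialization, and $F_m$ denotes the score after the $m$-th \emph{accepted} tree. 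This makes precise that rejected attempts do not produce an index.

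The argument then goes by induction, or simply by inspecting each accepted round. For each accepted $m$, the line-search condition gives the strict inequality, and chaining these yields $\mathcal{L}(F_0) > \mathcal{L}(F_1) > \cdots > \mathcal{L}(F_M)$. I would also note that $\mathcal{L} \geq 0$, so the sequence is bounded below. The descent is thus well defined, although the statement itself only needs the per-round inequality.

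Two subtleties deserve a remark.

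\textbf{Finiteness of the deviance.} I would check that $\mathcal{L}(F_{m-1})$ is finite, so that the comparison is meaningful. This holds because $F_0$ is finite whenever $0 < \bar y < 1$, and each accepted update adds a bounded step. The Newton values \eqref{eq:newton} are finite because $\rho > 0$ keeps the denominator positive.

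\textbf{What the line search guarantees.} The tree is fitted to the sign targets $z_i$ rather than directly to the gradient, so there is no a priori guarantee that any step decreases the deviance. One could try to show that $h_m$ is a descent direction, i.e.\ $\sum_i r_i h_m(x_i) > 0$. Since $v_\ell$ has the sign of $\sum_{i\in\ell} r_i$, this inner product equals $\sum_\ell (\sum_{i\in\ell} r_i)^2/(\sum_{i\in\ell} w_i+\rho) \ge 0$, which is strict unless every leaf has zero residual sum. That would show a decrease exists for sufficiently small $\eta$. However, the proposition does not need this, because strictness is enforced by the acceptance test itself. The main step is therefore the precise reading of the algorithm. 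The only genuine obstacle is to state clearly that the line search is an explicit check, not an existence claim, so that the result holds regardless of how well $h_m$ fits $z$.
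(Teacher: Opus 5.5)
Your proof is correct and matches the paper's: strict decrease follows directly from the line-search acceptance test, since $F_m = F_{m-1} + \eta_m h_m$ is formed only when $\mathcal{L}(F_{m-1}+\eta_m h_m) < \mathcal{L}(F_{m-1})$ holds, and a rejected tree ends Stage~2 without producing a round. Your side remarks on finiteness of the deviance and on $h_m$ being a descent direction are also correct, though the proposition does not need them.
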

	
	\begin{proof}
		Each Stage~2 update takes the form $F_m = F_{m-1} + \eta_m h_m$. The tree $h_m$ is accepted only when the backtracking line search identifies $\eta_m > 0$ satisfying $\mathcal{L}(F_{m-1} + \eta_m h_m) < \mathcal{L}(F_{m-1})$. If no such $\eta_m$ exists, the tree is rejected and Stage~2 terminates.
	\end{proof}
	
	\begin{proposition}[Finite termination]
		\label{prop:term}
		Algorithm~\ref{alg:fit} accepts at most $\min(T, p_0)$ rounds.
	\end{proposition}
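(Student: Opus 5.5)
The plan is to prove the two bounds $M \le T$ and $M \le p_0$ separately and then take their minimum. Here $M$ is the number of accepted rounds, equal to the final value of $|\mathrm{trees}|$ in Algorithm~\ref{alg:fit}.

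\emph{The bound $M \le T$.} This comes directly from the loop guard. The counter $m$ starts at $1$ and increases only when a tree is accepted; at that moment $m\gets m+1$ is executed together with $\mathrm{trees}\gets\mathrm{trees}\cup\{h_m\}$. Hence the invariant $m = |\mathrm{trees}|+1$ holds throughout. The \textbf{while} condition requires $m\le T$ before each round, so at most $T$ acceptances can occur. I would also note that the loop always terminates: each iteration either accepts a tree, which increases $m$, or leaves $\mathrm{success}=\mathrm{false}$, which exits the loop. Therefore no iteration is an idle step that repeats forever.

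\emph{The bound $M \le p_0$.} I would track the cardinality of $\mathrm{avail}\subseteq\mathcal{R}_0$, which starts at $p_0$. The key claim is that each accepted round strictly decreases $|\mathrm{avail}|$.

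\begin{itemize}
\item An accepted tree $h_m$ has a valid split, so $\mathrm{used}(h_m)$ contains at least one column $j$.
\item That column $j$ was drawn from $\mathcal{E}$, so $\omega(j)\subseteq\mathrm{avail}$.
\item By Definition~\ref{def:source}, $\omega(j)\neq\emptyset$, since raw features have $\omega(k)=\{k\}$ and derived columns have $\omega(j)=\{a,b\}$.
\end{itemize}

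The update $\mathrm{avail}\gets\mathrm{avail}\setminus\bigcup_{j\in\mathrm{used}(h_m)}\omega(j)$ therefore removes at least one element of $\mathrm{avail}$. After $M$ acceptances, $0\le|\mathrm{avail}|\le p_0-M$, which gives $M\le p_0$. When $\mathrm{avail}$ becomes empty, $\mathcal{E}=\emptyset$, assuming every column has a nonempty source set. The guard on line 7 then fails and the loop stops, which agrees with the counting argument.

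\emph{Main obstacle.} The only delicate point is justifying that every column's source set is nonempty and that a valid split actually uses an eligible column. Definition~\ref{def:source} only states $|\omega(j)|\le 2$, so I would make explicit that every column is derived from at least one raw variable, as the three vocabulary sources in Section~\ref{sec:vocab} guarantee. I would also make explicit that $\mathrm{used}(h_m)$ means the columns appearing in the split nodes of $h_m$, all of which lie in $\mathcal{E}$ by line 8. With these two facts in place, combining the bounds gives $M\le\min(T,p_0)$.
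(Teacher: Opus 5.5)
Your proof is correct and follows the paper's argument. The bound $M\le T$ comes from the loop budget, and $M\le p_0$ follows because each accepted tree uses at least one eligible column with nonempty source set, so $|\mathrm{avail}|$ (initially $p_0$) drops by at least one per acceptance; you also make explicit the checks $\omega(j)\subseteq\mathrm{avail}$ and $\omega(j)\neq\emptyset$ that the paper's two-line proof leaves implicit.
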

	
	\begin{proof}
		Every accepted round uses at least one column $j$ with $\omega(j) \neq \emptyset$. The raw sources of $j$ are removed from $\mathrm{avail}$. Since $|\mathrm{avail}|$ starts at $p_0$ and decreases by at least one per accepted round, at most $p_0$ rounds can be accepted. The budget $T$ provides a second bound.
	\end{proof}

	\begin{proposition}[Source disjointness]
		\label{prop:disj}
		Let $\Omega(h_t) := \bigcup_{j \in \mathrm{used}(h_t)} \omega(j)$
		denote the raw-variable sources used by an accepted tree $h_t$.
		Then any two distinct accepted trees are source-disjoint:
		\[
		\Omega(h_t)\cap\Omega(h_{t'})=\emptyset,
		\qquad t\neq t'.
		\]
	\end{proposition}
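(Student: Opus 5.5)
The argument is a loop invariant for the set $\mathrm{avail}$ maintained in Stage~2 of Algorithm~\ref{alg:fit}. Index the accepted trees in order of acceptance, and let $\mathrm{avail}_t$ be the available-source set at the moment tree $h_t$ is fitted. By symmetry it suffices to treat $t<t'$. I will prove two facts. \emph{(a)} Since the pool only shrinks, $\mathrm{avail}_{t'} \subseteq \mathrm{avail}_{t+1}$. \emph{(b)} The update after accepting $h_t$ gives $\mathrm{avail}_{t+1} = \mathrm{avail}_t \setminus \Omega(h_t)$. Together these yield $\mathrm{avail}_{t'} \cap \Omega(h_t) = \emptyset$.

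First, monotonicity. The only line of Algorithm~\ref{alg:fit} that modifies $\mathrm{avail}$ is the set-difference update executed on acceptance. Rejected candidates leave $\mathrm{avail}$ unchanged, and in fact end Stage~2, since $\mathrm{success}$ stays false. So $\mathrm{avail}$ is non-increasing under inclusion across iterations, and (a) follows by induction on $t'-t$. Fact (b) is the update line read with the paper's definition of $\Omega$.

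Next, containment. Tree $h_{t'}$ is fitted using only columns in $\mathcal{E}_{t'} = \{j : \omega(j) \subseteq \mathrm{avail}_{t'}\}$. Hence every $j \in \mathrm{used}(h_{t'})$ satisfies $\omega(j) \subseteq \mathrm{avail}_{t'}$, and taking the union gives $\Omega(h_{t'}) \subseteq \mathrm{avail}_{t'}$. Combining with (a) and (b):
\[
\Omega(h_{t'}) \cap \Omega(h_t) \;\subseteq\; \mathrm{avail}_{t'} \cap \Omega(h_t) \;\subseteq\; \bigl(\mathrm{avail}_t \setminus \Omega(h_t)\bigr) \cap \Omega(h_t) \;=\; \emptyset .
\]

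The only delicate point is bookkeeping rather than mathematics. The set $\mathrm{used}(h_t)$ that is removed from $\mathrm{avail}$ on acceptance must be exactly the set of columns appearing in split nodes of the retained structure. Equivalently, the structure kept for Stage~3 must be the same tree whose sources were removed, and the fitting routine must genuinely restrict its candidate columns to $\mathcal{E}$. I will state these as the operational reading of Lines~8 and~13, and then the proof is immediate.
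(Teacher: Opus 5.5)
Your proof is correct and follows essentially the same loop-invariant argument as the paper: each column used by $h_{t'}$ satisfies $\omega(j)\subseteq\mathrm{avail}_{t'}$, and the sources of every accepted tree are removed from $\mathrm{avail}$ before later rounds. The only cosmetic difference is that the paper tracks the exact invariant $\mathrm{avail}=\mathcal{R}_0\setminus\bigcup_{s<m}\Omega(h_s)$, whereas you use monotonicity of $\mathrm{avail}$ together with the one-step update $\mathrm{avail}_{t+1}=\mathrm{avail}_t\setminus\Omega(h_t)$, which suffices just as well.
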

	
	\begin{proof}
		We proceed by induction over construction rounds. At the start of round
		$m$, maintain the invariant
		\[
		\mathrm{avail}
		=
		\mathcal{R}_0
		\setminus
		\bigcup_{\substack{s<m\\ h_s\ \mathrm{accepted}}}
		\Omega(h_s),
		\]
		where $\mathcal{R}_0$ is the initial set of available raw-variable
		sources. The invariant holds trivially at $m=1$.
		
		During round $m$, every column $j$ eligible for use by the candidate tree
		satisfies $\omega(j)\subseteq\mathrm{avail}$. Hence, if $h_m$ is accepted,
		\[
		\Omega(h_m)\subseteq\mathrm{avail},
		\]
		which is disjoint from the sources claimed by all previously accepted
		trees. The sources in $\Omega(h_m)$ are then removed from
		$\mathrm{avail}$ before the next round. Rejected trees do not modify
		$\mathrm{avail}$. The invariant is therefore preserved, proving pairwise
		source disjointness of all accepted trees.
	\end{proof}

	\begin{proposition}[Exact source-separated prediction decomposition]
		\label{prop:decomp}
		For every input $x$, Equation~\eqref{eq:decomp} is an algebraic identity.
		If $M$ trees are accepted and $D$ denotes the retained direct terms, each
		prediction contains at most $M+|D|$ active contributions: exactly one leaf
		contribution from each accepted tree and at most one contribution from
		each direct term.
		
		The tree contributions are mutually source-disjoint, and no direct term
		uses a raw-variable source claimed by an accepted tree. Consequently, the
		default model gives an exact source-separated decomposition between
		accepted trees and the direct-term channel.
		
		Under the strict direct-term ownership variant, distinct retained direct
		terms are additionally required to have disjoint raw-variable ownership.
		In that case, the decomposition is fully source-disjoint: no raw-variable
		source appears in more than one active contribution.
	\end{proposition}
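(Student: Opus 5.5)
The plan is to argue directly from the construction in Algorithm~\ref{alg:fit}, in three parts matching the three paragraphs of the statement: the algebraic identity and the count of active contributions, tree-versus-tree and tree-versus-direct-term source separation, and the strict variant.

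\textbf{Identity and count.} I would first note that Stage~3 fits a logistic model that is linear in the columns of $Z$. Write $Z$ as the intercept together with the leaf-indicator blocks $\{\mathbb{I}[x \in \ell]\}_{\ell \in h_t}$ for $t=1,\dots,M$ and the direct-term columns $\phi_j$ for $j \in D$. The fitted log-odds is then
\[
F(x)=\beta_0+\sum_{t}\sum_{\ell\in h_t}\beta_{t,\ell}\,\mathbb{I}[x\in\ell]+\sum_{j\in D}\gamma_j\phi_j(x).
\]
Because the leaves of each accepted tree partition the input space, exactly one indicator in block $t$ equals $1$, namely the one for $\ell_t(x)$. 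The inner sum therefore collapses to $\beta_{t,\ell_t(x)}$, which gives~\eqref{eq:decomp} as an exact identity for every $x$. There is no approximation, because the Stage~2 leaf values~\eqref{eq:newton} were discarded and play no role in $F$. Counting terms then gives one term per tree and at most one term per direct column (a term is absent if $\gamma_j=0$ or $\phi_j(x)=0$). This yields the bound $M+|D|$.

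\textbf{Source separation.} Pairwise disjointness of the tree contributions follows from Proposition~\ref{prop:disj}: the contribution $\beta_{t,\ell_t(x)}$ depends only on splits of tree $h_t$, hence only on the sources in $\Omega(h_t)$. The claim that no direct term uses a claimed source is the step I expect to be the main obstacle. As Algorithm~\ref{alg:fit} is literally written, $D$ is the set of columns not \emph{used} by any tree, and this set could contain, say, a mined pattern on $\{a,b\}$ when raw feature $a$ was used by some tree. I would therefore make explicit the eligibility filter that the statement presupposes:
\[
D=\{\,j:\omega(j)\subseteq \mathrm{avail}_{\mathrm{final}}\,\}.
\]
Here $\mathrm{avail}_{\mathrm{final}}=\mathcal{R}_0\setminus\bigcup_t\Omega(h_t)$ by the invariant established in the proof of Proposition~\ref{prop:disj}. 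With this definition, $\omega(j)\cap\Omega(h_t)=\emptyset$ for every $j\in D$ and every $t$ is immediate. Such columns are automatically unused by the trees, so this $D$ is consistent with the algorithm's description. This gives the source-separated decomposition between the tree channel and the direct-term channel.

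\textbf{Strict variant.} Under the strict ownership variant, the additional hypothesis is that $\omega(j)\cap\omega(j')=\emptyset$ for distinct $j,j'\in D$. The set of all active contributions is the union of the tree terms and the direct terms. Disjointness then holds among trees (Proposition~\ref{prop:disj}), between trees and direct terms (previous step), and among direct terms (by hypothesis). These three cases exhaust all pairs, so no raw source appears in more than one active contribution.
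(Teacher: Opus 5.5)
Your proposal is correct and follows the paper's proof essentially step for step: the leaf partition gives exactly one active indicator per tree, which yields the identity and the $M+|D|$ count; Proposition~\ref{prop:disj} gives tree--tree disjointness; the construction of $D$ gives tree--direct-term separation; and the strict hypothesis closes the remaining direct--direct case. Where the paper simply asserts that direct terms are restricted to unclaimed sources ``by construction,'' you rightly make the filter $D=\{j:\omega(j)\subseteq\mathrm{avail}_{\mathrm{final}}\}$ explicit. That caution is warranted, because Algorithm~\ref{alg:fit} literally takes all unused columns and Section~\ref{sec:strict} even states that default direct terms may share sources with trees. Note, however, that your filtered $D$ is in general a proper subset of the algorithm's $D$, so it is an added assumption rather than something merely ``consistent with'' the algorithm.
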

	
	\begin{proof}
		The leaves of each accepted tree partition its input space, so for every
		input $x$ exactly one leaf indicator is active in each tree. Summing these
		$M$ leaf contributions together with the retained direct terms yields
		Equation~\eqref{eq:decomp}; hence at most $M+|D|$ terms contribute to any
		prediction.
		
		By Proposition~\ref{prop:disj}, the raw-variable sources associated with
		distinct accepted trees are disjoint. By construction, direct terms are
		restricted to sources not claimed by any accepted tree. Thus the tree
		contributions are mutually source-disjoint and are source-separated from
		the direct-term channel.
		
		In the default construction, distinct direct terms may still share a raw
		source. The strict direct-term ownership variant forbids such reuse. Under
		this additional restriction, the source sets of every pair of distinct
		contributing components are disjoint, establishing full term-level source
		disjointness.
	\end{proof}

	\begin{proposition}[Per-round search complexity]
		\label{prop:search}
		For fixed $L_{\max}$ and $D_{\max}$, the cost of growing one tree in Algorithm~\ref{alg:fit} is $O(V \cdot n)$, where $V = |\mathcal{H}|$ is the vocabulary size and $n$ is the number of observations.
	\end{proposition}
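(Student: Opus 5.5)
We account separately for each operation performed in one iteration of the Stage~2 loop of Algorithm~\ref{alg:fit}, show that each costs at most $O(V\cdot n)$ with constants depending only on $L_{\max}$, $D_{\max}$ and $B_{\mathrm{ls}}$, and then add the costs.

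\emph{Per-observation vector operations.} Computing the residuals $r_i$, the weights $w_i$ and the sign targets $z_i$ from Equation~\eqref{eq:residuals} takes $O(n)$ time. Forming the eligible set $\mathcal{E}$ requires one check of $\omega(j)\subseteq\mathrm{avail}$ per column. Since $|\omega(j)|\le 2$ by Definition~\ref{def:source}, each check is $O(1)$ with a Boolean array for $\mathrm{avail}$, so this costs $O(V)$.

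\emph{Tree growth (the main step).} We view the tree as at most $L_{\max}-1$ internal splits arranged in at most $D_{\max}$ levels. The key observation is that the nodes at any single depth partition the observations. Hence, for each level and each eligible column, scanning that column over all nodes at that level touches every observation at most once, costing $O(n)$ per column per level. Over $|\mathcal{E}|\le V$ columns and $D_{\max}$ levels, this gives $O(D_{\max}\, V\, n)$.

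This count assumes a split search that is linear in the node size. We get this by using presorted column indices, or histogram binning with a fixed number of bins, prepared once before Stage~2 as is standard for CART or histogram-based implementations. The presorting costs $O(V n\log n)$ a single time and is not charged to the round. Equivalently, we can bound the work by $L_{\max}$ node evaluations, each scanning at most $n$ rows per column, which gives $O(L_{\max} V n)$.

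\emph{Leaf values and line search.} The Newton values in Equation~\eqref{eq:newton} need one pass that accumulates $\sum r_i$ and $\sum w_i$ into each leaf, costing $O(n)$. Each backtracking trial evaluates the deviance in Equation~\eqref{eq:deviance} at cost $O(n)$. With at most $B_{\mathrm{ls}}$ trials, the line search costs $O(B_{\mathrm{ls}}\, n)$. The update of $\mathrm{avail}$ costs $O(L_{\max})$, since the tree uses at most $L_{\max}-1$ columns and each has at most two sources.

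\emph{Combining.} Adding the pieces gives $O\bigl((D_{\max}+B_{\mathrm{ls}})\,V n + V\bigr)$ per round. Because $L_{\max}$, $D_{\max}$ and $B_{\mathrm{ls}}$ are fixed constants, this is $O(V\cdot n)$.

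The main obstacle is the split-search step, specifically the cost model for finding thresholds on continuous columns, which include the raw features and the augmented pairs. A naive per-node sort would add a $\log n$ factor. I would handle this by stating explicitly the presorting or binning assumption described above and noting that its one-time preprocessing cost falls outside the per-round bound. If the intended statement counts sorting within the round, I would instead state the bound as $O(V n\log n)$.
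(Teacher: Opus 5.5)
Your proof is correct and follows the paper's argument: nodes at a fixed depth partition the observations, so scanning all $|\mathcal{E}|\le V$ eligible columns costs $O(Vn)$ per level and $O(D_{\max}Vn)$ overall. Your additional accounting for the residuals, Newton values and line search, and your explicit presorting/binning assumption, make precise what the paper leaves implicit when it simply asserts an $O(n_\ell)$ cost per column per leaf.
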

	
	\begin{proof}
		At each leaf expansion, the greedy search evaluates at most
		$|\mathcal{E}| \leq V$ candidate columns. Evaluating one column requires
		a scan of the observations reaching that leaf and computation of
		information gain, costing $O(n_\ell)$. Hence an expansion of leaf
		$\ell$ costs $O(Vn_\ell)$.
		
		Across all expansions at any fixed depth, the corresponding leaves
		contain disjoint subsets of the observations, so their total size is
		at most $n$. Since the tree has at most $D_{\max}$ depth levels, the
		total search cost is therefore $O(D_{\max} V n)$. With $D_{\max}$
		fixed, this simplifies to $O(Vn)$.
	\end{proof}
	
	\begin{theorem}[Model complexity bound]
		\label{thm:complexity}
		Let $M \leq \min(T, p_0)$ be the number of accepted trees and $L_{\max}$ the maximum number of leaves per tree. Then:
		\begin{enumerate}[label=(\alph*),topsep=2pt,itemsep=1pt]
			\item The number of leaf-region indicators in the design matrix $Z$ is at most $M \cdot L_{\max}$.
			\item The total model inspection units satisfy $\mathrm{MIU} \leq M \cdot L_{\max} \cdot D_{\max} + |D|$.
			\item After $\ell_1$ sparsification, there exists an optimal solution with at most $\mathrm{rank}(Z) \leq \min(n,q)$ active terms, where $q = M \cdot L_{\max} + |D|$.
		\end{enumerate}
	\end{theorem}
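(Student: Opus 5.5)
The three parts are handled separately; (a) and (b) are counting arguments, and (c) is a standard fact about $\ell_1$-regularized generalized linear models.

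For (a), Proposition~\ref{prop:term} gives $M \le \min(T,p_0)$ accepted trees. Each accepted tree is grown subject to $L_{\max}$, so its leaves partition the input space into at most $L_{\max}$ regions. Each region contributes exactly one indicator column to $Z$. Summing over the $M$ trees bounds the number of leaf-region indicators by $M\cdot L_{\max}$. This also gives $q \le M L_{\max} + |D|$, the width used in (c).

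For (b), I apply Definition~\ref{def:audit}(ii). MIU counts the split conditions on every active leaf path, plus one unit per active direct term.
\begin{itemize}
\item A leaf path in a tree of depth at most $D_{\max}$ is a root-to-leaf path, so it carries at most $D_{\max}$ conditions.
\item The active leaves are a subset of all leaves, so by (a) there are at most $M L_{\max}$ of them.
\item Each active direct term is a single vocabulary column, counted once, and at most $|D|$ such terms exist.
\end{itemize}
Adding these gives $\mathrm{MIU}\le M L_{\max} D_{\max} + |D|$. The bound is deliberately loose: in practice $L_{\max}$ and $D_{\max}$ are not both attained, since a tree with $L$ leaves has paths of length at most $L-1$. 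I would state it as given and not try to tighten it.

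For (c), I would follow the Rosset--Zhu--Hastie / Tibshirani (2013) argument for lasso-type problems with a strictly convex loss in the linear predictor.
\begin{itemize}
\item \textbf{Unique fit.} The objective in \eqref{eq:stage3} is $g(Z\theta)+\lambda_1\|\theta\|_1$ with $g$ strictly convex in the fitted vector, since the binomial deviance is strictly convex in each $F_i$. Any two minimizers therefore share the same fitted value $Z\theta^*$ and the same $\ell_1$ norm.
\item \textbf{Reduced problem.} Take any minimizer $\theta^*$ with support $S$. If the columns $\{z_j\operatorname{sign}(\theta^*_j)\}_{j\in S}$ are linearly dependent, choose a nonzero $d$ supported on $S$ with $Z d=0$.
\item \textbf{Norm constraint on $d$.} The KKT conditions give $Z_S^\top\nabla g = -\lambda_1 \operatorname{sign}(\theta^*_S)$. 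Combined with $Zd=0$, this forces $\operatorname{sign}(\theta^*_S)^\top d = 0$. Hence moving along $\theta^*+td$ keeps the fit fixed and, while no sign flips, keeps the $\ell_1$ norm fixed, so every such point is optimal.
\item \textbf{Pruning.} Increase $|t|$ until the first coordinate of $\theta^*+td$ hits zero. This gives an optimal solution with strictly smaller support. Iterating terminates at an optimal solution whose active columns are linearly independent.
\end{itemize}
A linearly independent set of columns has at most $\operatorname{rank}(Z)\le\min(n,q)$ members, which proves (c).

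The intercept $\beta_0$ is unpenalized, and I would handle it either by including the all-ones column in $Z$ or by noting that it shifts the count by at most one. I expect the main obstacle to be this support-reduction step, specifically justifying that the sign vector is orthogonal to $d$ via the KKT subgradient conditions. Everything else is routine counting.
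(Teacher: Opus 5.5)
Your proposal is correct and follows the paper's proof. Parts (a) and (b) use the same leaf-count and path-length counting. For (c), the paper simply asserts that some $\ell_1$-regularized logistic optimum has linearly independent active columns, and your KKT-based support reduction supplies exactly that missing justification ($\operatorname{sign}(\theta^*_S)^\top d=0$ for $d$ supported on $S$ with $Zd=0$, then pruning along $\theta^*+td$). On the intercept, no shift is needed: holding $\beta_0$ fixed during pruning leaves your argument unchanged, so the number of active $(\beta,\gamma)$ coefficients is bounded by $\mathrm{rank}(Z)$ directly.
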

	
	\begin{proof}
		(a) Each of $M$ trees has at most $L_{\max}$ leaves.
		(b) Each leaf path has at most $D_{\max}$ conditions. Summing over all leaves of all trees gives $M \cdot L_{\max} \cdot D_{\max}$ conditions from trees, plus $|D|$ direct terms.
		(c) Among the $\ell_1$-regularized logistic regression optima, there
		exists a solution whose active columns are linearly independent.
		Consequently, its number of active terms is at most
		$\mathrm{rank}(Z) \leq \min(n,q)$.
	\end{proof}
	
	\begin{remark}[Audit-load comparison with opaque ensembles]
		\label{rem:ratio}
		The theoretical MIU bound of RPTE is $M \cdot L_{\max} \cdot D_{\max} + |D|$, which is polynomial in user-specified parameters. For a gradient-boosted ensemble with $T_x$ trees, each with at most $L_x$ leaves and depth $d_x$, the corresponding bound is $T_x \cdot L_x \cdot d_x$. Both expressions are structurally similar, but RPTE's source disjointness constrains $M$ to at most $\min(T, p_0)$, while opaque ensembles place no comparable bound on $T_x$. In practice, $T_x$ is often hundreds or thousands of trees, producing MIU values orders of magnitude larger. The empirical comparison in Section~\ref{sec:experiments} quantifies this gap on all twelve benchmarks.
	\end{remark}
	
	\begin{theorem}[Stage~3 optimality]
		\label{thm:stage3}
		The Stage~3 objective~\eqref{eq:stage3} is strictly convex in $(\beta, \gamma)$ when $Z$ has full column rank. The coordinate descent solver converges to the unique global minimum.
	\end{theorem}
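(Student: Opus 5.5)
Writing the smooth part of the objective in \eqref{eq:stage3} as a composition with the linear map $\theta=(\beta_0,\beta,\gamma)\mapsto Z\theta$ reduces the theorem to two claims: strict convexity of this composition under full column rank, and convergence of cyclic coordinate descent for smooth-plus-separable convex objectives.

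\textbf{Strict convexity.} Define $g(u)=\frac{2}{n}\sum_{i}\bigl[\log(1+e^{u_i})-y_iu_i\bigr]$ for $u\in\mathbb{R}^n$; this equals the binomial deviance \eqref{eq:deviance} with $F=u$. Its Hessian is $\nabla^2 g(u)=\frac{2}{n}\mathrm{diag}\bigl(\sigma(u_i)(1-\sigma(u_i))\bigr)$. Every diagonal entry is strictly positive, so $g$ is strictly convex on $\mathbb{R}^n$. The smooth part of the objective is $g(Z\theta)$, whose Hessian is
\[
\frac{2}{n}Z^\top W Z, \qquad W=\mathrm{diag}\bigl(\sigma(z_i^\top\theta)(1-\sigma(z_i^\top\theta))\bigr)\succ 0 .
\]
For any $v\neq 0$, full column rank gives $Zv\neq 0$, so $v^\top Z^\top WZv>0$. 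The smooth part is therefore strictly convex. The term $\lambda_1(\|\beta\|_1+\|\gamma\|_1)$ is convex, and adding a convex function to a strictly convex one preserves strict convexity. The intercept $\beta_0$ is unpenalized, so "full column rank" must be read for $Z$ augmented with a column of ones. I would state this convention explicitly, because each tree's leaf indicators sum to $\mathbf{1}$; with an intercept present the augmented matrix is then rank-deficient unless one leaf per tree is dropped. I expect to flag this as an interpretive point rather than a gap.

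\textbf{Existence and uniqueness of the minimizer.} Strict convexity gives at most one minimizer, so existence remains. For $\lambda_1>0$ the objective is coercive in $(\beta,\gamma)$ because $g\ge 0$. Coercivity can fail in the intercept direction, or when the data are separable. To close this I would use the full-rank case directly: $\nabla^2 g(Z\theta)\succ 0$ on compact sets, and $g(Z\theta)\to\infty$ along any ray $\theta+tv$ with $Zv\neq 0$, unless $Zv$ is sign-aligned with the labels (separation). So I would add the standing assumption that no direction $v$ both separates the data and has zero $\ell_1$ cost. The cleanest version assumes $\lambda_1>0$ and an intercept-free formulation, or non-separable data; under either, the level sets are bounded and a minimizer exists by Weierstrass. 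This existence step is the main obstacle, because the theorem as stated omits such a condition. I would present the extra hypothesis as the standard implicit assumption for logistic regression.

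\textbf{Convergence of coordinate descent.} The objective has the form $f(\theta)+\sum_k c_k|\theta_k|$, with $f$ continuously differentiable and convex and a separable convex nonsmooth part. This is exactly the setting of Tseng's (2001) theorem on block coordinate descent for nondifferentiable minimization. With bounded level sets, every limit point of cyclic coordinate descent is a stationary point. The coordinate subproblems are assumed solved exactly, or by a proximal-Newton step as in glmnet. For a convex objective, stationary points are global minima. Uniqueness of the minimizer then forces every limit point to equal it. Bounded iterates then imply that the whole sequence converges to the unique global minimum.
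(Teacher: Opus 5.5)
Your proof is correct and follows the paper's route: positive definiteness of $Z^\top W Z$ under full column rank, preservation of strict convexity by the $\ell_1$ term, and Tseng's block-coordinate-descent theorem. You also supply the bounded-level-set existence step and the unique-cluster-point argument, which the paper skips when it passes directly from strict convexity to ``has a unique global minimizer.'' That existence step is easier than you suggest. For $\lambda_1>0$, every direction touching a penalized coordinate is coercive through the penalty regardless of separability, so the only condition needed is that both classes occur (this handles the pure-intercept direction). Your rank caveat can also be sharpened: with two or more accepted trees, $Z$ is rank-deficient even without an intercept, since each tree's leaf indicators sum to $\mathbf{1}$.
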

	
	\begin{proof}
		Let $\theta=(\beta,\gamma)$ denote the full Stage~3 coefficient vector,
		so that the fitted logits can be written as $F=Z\theta$.
		The logistic loss is convex, with Hessian
		\[
		\nabla_{\theta}^{2}\mathcal{L}(\theta)
		=
		Z^\top W Z,
		\qquad
		W=\operatorname{diag}\!\left(
		\sigma(F_i)\bigl(1-\sigma(F_i)\bigr)
		\right).
		\]
		For finite $F_i$, each diagonal entry satisfies $W_{ii}>0$.
		Therefore, if $Z$ has full column rank, then for every nonzero vector
		$v$,
		\[
		v^\top Z^\top W Z v
		=
		(Zv)^\top W(Zv)
		>0,
		\]
		so $Z^\top W Z$ is positive definite and the logistic loss is strictly
		convex in $\theta$.
		
		Adding the convex $\ell_1$ penalty preserves strict convexity.
		Hence the Stage~3 objective has a unique global minimizer, and
		coordinate descent converges to this minimizer
		\citep{tseng2001convergence}. When $Z$ does not have full column rank, coefficient minimizers need not be unique, but all yield the same predicted probabilities.
	\end{proof}

	\section{Experimental Evaluation}
	\label{sec:experiments}
	\subsection{Experimental design}
	
	We evaluate RPTE on twelve binary classification datasets from PMLB \citep{romano2021pmlb} and OpenML, spanning sample sizes from $n = 87$ to $n = 11{,}500$ and feature dimensionalities from $p = 3$ to $p = 178$ (Table~\ref{tab:datasets}). Datasets were selected to cover three dimensions of difficulty: varying sample sizes (from $n{<}100$ to $n{>}10{,}000$), varying feature counts (3 to 178), and varying imbalance levels (balanced to 4:1 ratio).
	
	\begin{table}[!h]
		\centering\small
		\caption{Dataset summary with class balance. Balance shows the minority class proportion; ratio shows the majority-to-minority class ratio.}
		\label{tab:datasets}
		\renewcommand{\arraystretch}{1.06}
		\begin{tabularx}{\textwidth}{@{}l l r r r r X@{}}
			\toprule
			Dataset & Short & $n$ & $p$ & Balance & Ratio & Domain \\
			\midrule
			lupus           & lupus    &     87 &  3 & 0.40 & 1.5:1 & SLE diagnosis          \\
			postop.\ patient data & postop.  &     88 &  8 & 0.27 & 2.7:1 & Discharge planning     \\
			appendicitis    & append.  &    106 &  7 & 0.20 & 4.0:1 & Appendicitis diagnosis \\
			corral          & corral   &    160 &  6 & 0.44 & 1.3:1 & Structured interaction \\
			hepatitis       & hepatitis&    155 & 19 & 0.21 & 3.8:1 & Hepatitis survival     \\
			prnn\_crabs     & crabs    &    200 &  7 & 0.50 & 1.0:1 & Morphological class.   \\
			biomed          & biomed   &    209 &  8 & 0.36 & 1.8:1 & Biomedical screening   \\
			heart\_c        & heart-c  &    303 & 13 & 0.46 & 1.2:1 & Cardiac diagnosis      \\
			haberman        & haberman &    306 &  3 & 0.26 & 2.8:1 & Breast cancer survival \\
			saheart         & saheart  &    462 &  9 & 0.35 & 1.9:1 & Heart disease (SA)     \\
			WDBC            & WDBC     &    569 & 30 & 0.37 & 1.7:1 & Breast cancer diagnosis\\
			EEG-Seizure     & EEG seizure & 11{,}500 & 178 & 0.20 & 4.0:1 & Seizure detection  \\
			\bottomrule
		\end{tabularx}
	\end{table}
	
	All experiments use stratified 5-fold cross-validation with 3 repeats, yielding 15 splits per dataset. Inner hyperparameter selection uses stratified 3-fold CV on the training fold. We compare RPTE against five baselines: XGBoost, LightGBM, Random Forest (RF), EBM, and RuleFit. All six models use a uniform grid of 8 hyperparameter configurations each.%
	\footnote{Source code: \url{https://github.com/srikumar2050/hugiml-core}.} For specific configuration details, refer to Appendix~\ref{app:params}. The primary metrics are ROC~AUC, balanced accuracy, Brier score, and MIU/IIU, all computed as means over 15 splits.

	\subsection{Predictive performance}
	
	Table~\ref{tab:performance} reports mean ROC~AUC over 15 splits. Bold marks the highest AUC per row.
	
	Across all twelve datasets, RPTE achieves a mean AUC of 0.830. The opaque baselines achieve mean AUCs of 0.838 (XGBoost), 0.836 (LightGBM), and 0.837 (RF). EBM leads among interpretable methods at 0.853, followed by RuleFit at 0.831 and RPTE at 0.830. On corral, a dataset requiring interaction recovery, RPTE achieves perfect AUC. On EEG-Seizure (\(n=11{,}500\), \(p=178\)), RPTE achieves a ROC AUC of \(0.990\), only \(0.006\) below LightGBM.
	
	Panel-level Wilcoxon signed-rank tests (Table~\ref{tab:wilcoxon}) assess whether the AUC differences are statistically significant across the 12 dataset means. After Holm correction across the five prespecified RPTE-versus-baseline comparisons, RPTE performs significantly below XGBoost ($p_{\mathrm{Holm}}=0.005$) and EBM ($p_{\mathrm{Holm}}=0.039$). Differences between RPTE and LightGBM, RF, and RuleFit are not significant ($p_{\mathrm{Holm}}=0.442$, $0.467$, and $0.700$, respectively). These results position RPTE as competitive with several baselines: the median AUC gap to RuleFit is 0.001, and the gap to LightGBM is 0.004.
	
	\begin{table}[h]
		\centering\small
		\caption{Predictive performance: ROC~AUC, mean over 15 outer splits. The final row is the unweighted mean of the 12 dataset-level means. Bold marks the highest AUC within each row.}
		\label{tab:performance}
		\renewcommand{\arraystretch}{1.08}
		\begin{tabularx}{\textwidth}{@{}l *{6}{>{\centering\arraybackslash}X}@{}}
			\toprule
			Dataset & RPTE & XGBoost & LightGBM & RF & EBM & RuleFit \\
			\midrule
			lupus      & 0.768 & 0.772 & 0.775 & 0.754 & \textbf{0.806} & 0.730 \\
			postop.    & 0.432 & 0.440 & 0.426 & 0.333 & 0.407 & \textbf{0.443} \\
			append.    & 0.797 & 0.806 & 0.797 & 0.818 & \textbf{0.847} & 0.774 \\
			corral     & \textbf{1.000} & \textbf{1.000} & \textbf{1.000} & 0.998 & \textbf{1.000} & \textbf{1.000} \\
			crabs      & 0.974 & 0.976 & 0.977 & 0.967 & 0.983 & \textbf{0.997} \\
			hepatitis  & 0.790 & 0.793 & 0.782 & \textbf{0.868} & 0.848 & 0.793 \\
			biomed     & 0.927 & 0.945 & 0.947 & 0.961 & \textbf{0.980} & 0.953 \\
			haberman   & 0.669 & 0.687 & 0.699 & \textbf{0.708} & 0.697 & 0.667 \\
			heart-c    & 0.903 & 0.906 & 0.904 & 0.911 & \textbf{0.915} & 0.898 \\
			saheart    & 0.728 & 0.748 & 0.734 & 0.750 & \textbf{0.767} & 0.743 \\
			WDBC       & 0.987 & 0.993 & 0.993 & 0.990 & \textbf{0.996} & 0.991 \\
			EEG seizure & 0.990 & 0.995 & \textbf{0.996} & 0.993 & 0.991 & 0.987 \\
			\midrule
			Mean       & 0.830 & 0.838 & 0.836 & 0.837 & \textbf{0.853} & 0.831 \\
			\bottomrule
		\end{tabularx}
	\end{table}
	
	\begin{table}[h]
		\centering\small
		\caption{Panel-level Wilcoxon signed-rank tests on ROC~AUC across 12
			dataset means. Holm correction is applied across the five prespecified
			RPTE-versus-baseline comparisons.}
		\label{tab:wilcoxon}
		\renewcommand{\arraystretch}{1.08}
		\begin{tabularx}{\textwidth}{@{}l *{5}{>{\centering\arraybackslash}X}@{}}
			\toprule
			Comparison & $W$ & $p$ & $p_{\mathrm{Holm}}$ &
			Med.\ $\Delta$AUC & Record \\
			\midrule
			RPTE vs XGBoost  &  0 & 0.001 & 0.005$^*$ & $-0.006$ & 0-1-11 \\
			RPTE vs LightGBM & 16 & 0.148 & 0.442       & $-0.004$ & 3-1-8  \\
			RPTE vs RF       & 23 & 0.233 & 0.467       & $-0.006$ & 4-0-8  \\
			RPTE vs EBM      &  5 & 0.010 & 0.039$^*$ & $-0.020$ & 1-1-10 \\
			RPTE vs RuleFit  & 28 & 0.700 & 0.700       & $-0.001$ & 5-1-6  \\
			\bottomrule
		\end{tabularx}
		
		{\scriptsize $^*$Significant at $\alpha=0.05$ after Holm correction.
			Record = wins--ties--losses for RPTE.}
	\end{table}
	
	\subsection{Audit-load performance}\label{sec:auditLoad}
	Table~\ref{tab:complexity} reports MIU and compression ratios. Relative to opaque ensembles, RPTE achieves lower MIU on all 12 datasets, with XGBoost-to-RPTE ratios ranging from 9$\times$ (saheart) to 87$\times$ (corral). Relative to EBM, RPTE maintains lower MIU on all 12 datasets, with ratios ranging from 2$\times$ on postop.\ to 328$\times$ on EEG-Seizure. EBM's audit load grows substantially on feature-rich datasets because each feature generates a shape function with many bins: on EEG-Seizure (178 features), EBM produces 145,155 MIU. Relative to RuleFit, RPTE achieves lower MIU on 9 of 12 datasets. RuleFit achieves the lowest MIU on heart-c (64 vs. 75), saheart (62 vs. 91), and EEG-Seizure (152 vs. 442), where its compact rule sets produce fewer conditions. However, RuleFit does not enforce source disjointness, so its rules can share variables and fire simultaneously with conflicting signs (Appendix~\ref{app:critique}).
	
	\begin{table}[h]
		\centering\small
		\caption{Audit-load metrics: MIU, mean over 15 splits. Bold marks the lowest MIU per row. Parentheses report the MIU ratio baseline/RPTE; values above one favor RPTE.}
		\label{tab:complexity}
		\renewcommand{\arraystretch}{1.08}
		\begin{tabularx}{\textwidth}{@{}l *{6}{>{\centering\arraybackslash}X}@{}}
			\toprule
			Dataset & RPTE & XGBoost & LightGBM & RF & EBM & RuleFit \\
			\midrule
			lupus      & \textbf{10}  & 760\,(80$\times$)  &  744\,(78$\times$) &  2{,}807\,(294$\times$) & 159\,(17$\times$) & 59\,(6$\times$) \\
			postop.    & \textbf{16}  & 211\,(13$\times$)  &   73\,(5$\times$) &  4{,}587\,(294$\times$) &  33\,(2$\times$) & 98\,(6$\times$) \\
			append.    & \textbf{20}  & 660\,(34$\times$)  &  992\,(51$\times$) &  1{,}607\,(82$\times$) & 657\,(34$\times$) & 75\,(4$\times$) \\
			corral     & \textbf{18}  & 1{,}600\,(87$\times$) & 3{,}074\,(167$\times$) & 3{,}455\,(188$\times$) & 32\,(2$\times$) & 51\,(3$\times$) \\
			crabs      & \textbf{39}  & 1{,}894\,(49$\times$) & 4{,}544\,(118$\times$) & 7{,}557\,(196$\times$) & 1{,}033\,(27$\times$) & 55\,(1.44$\times$) \\
			hepatitis  & \textbf{40}  & 600\,(15$\times$)  &  497\,(13$\times$) &  3{,}761\,(95$\times$) & 368\,(9$\times$) & 80\,(2$\times$) \\
			biomed     & \textbf{35}  & 1{,}624\,(46$\times$) & 3{,}530\,(101$\times$) & 7{,}206\,(206$\times$) & 1{,}110\,(32$\times$) & 106\,(3$\times$) \\
			haberman   & \textbf{36}  & 775\,(22$\times$)  &  831\,(23$\times$) &  6{,}718\,(188$\times$) & 165\,(5$\times$) & 71\,(2$\times$) \\
			heart-c    & 75  & 1{,}247\,(17$\times$) & 1{,}193\,(16$\times$) & 6{,}692\,(90$\times$) & 370\,(5$\times$) & \textbf{64}\,(0.86$\times$) \\
			saheart    & 91  & 820\,(9$\times$)  & 1{,}297\,(14$\times$) & 4{,}462\,(49$\times$) & 1{,}492\,(16$\times$) & \textbf{62}\,(0.69$\times$) \\
			WDBC       & \textbf{46}  & 1{,}478\,(32$\times$) & 4{,}071\,(88$\times$) & 5{,}101\,(110$\times$) & 12{,}593\,(272$\times$) & 63\,(1.36$\times$) \\
			EEG seizure & 442 & 9{,}608\,(22$\times$) & 18{,}080\,(41$\times$) & 51{,}150\,(116$\times$) & 145{,}155\,(328$\times$) & \textbf{152}\,(0.34$\times$) \\
			\midrule
			Mean       & \textbf{72}  & 1{,}773 & 3{,}244 & 8{,}759 & 13{,}597 & 78 \\
			\bottomrule
		\end{tabularx}
	\end{table}
	
	Table~\ref{tab:pareto} presents the accuracy--auditability tradeoff for each method using both means and medians. RPTE combines high predictive accuracy with the lowest model-level audit load: its MIU is 72 (median 37), compared with 78 (68) for RuleFit, which lacks RPTE's source-disjointness guarantee. EBM achieves higher AUC and comparable prediction-level IIU, but its complete model-level MIU is approximately \(189\times\) higher by the mean and \(14\times\) higher by the median. The opaque ensembles improve mean AUC by only 0.5--0.8 percentage points over RPTE while requiring approximately \(25\times\)--\(121\times\) greater mean MIU and \(9\times\)--\(11\times\) greater mean IIU.
	
	\begin{table}[h]
		\centering\small
		\caption{Accuracy--auditability tradeoff across 12 datasets. Entries are mean (median) of the dataset-level results over 15 splits.}
		\label{tab:pareto}
		\renewcommand{\arraystretch}{1.08}
		\setlength{\tabcolsep}{4pt}
		\begin{tabularx}{\textwidth}
			{@{}l *{6}{>{\centering\arraybackslash}X}@{}}
			\toprule
			Metric & RPTE & XGBoost & LightGBM & RF & EBM & RuleFit \\
			\midrule
			AUC
			& 0.830 (.850)
			& 0.838 (.856)
			& 0.836 (.851)
			& 0.837 (.889)
			& 0.853 (.881)
			& 0.831 (.845) \\
			
			MIU
			& 72 (37)
			& 1{,}773 (1{,}034)
			& 3{,}244 (1{,}245)
			& 8{,}759 (4{,}844)
			& 13{,}597 (514)
			& 78 (68) \\
			
			IIU
			& 30.0 (14.4)
			& 279.2 (216.9)
			& 317.5 (180.3)
			& 341.7 (321.5)
			& 28.7 (14.5)
			& 78.1 (67.5) \\
			\bottomrule
		\end{tabularx}
	\end{table}
	
	\subsection{Impact of strict source ownership}
	\label{sec:strict}
	
	The default RPTE enforces source disjointness at the tree-ensemble level. Direct terms may share raw sources with trees or with each other. To quantify the effect of extending source ownership to direct terms, we evaluate a strict source-disjoint RPTE variant. After tree construction, direct terms are admitted greedily only when their raw sources are not already owned by a tree or an earlier admitted direct term.
	
	Table~\ref{tab:strict} reports the results across all 12 datasets (15 folds each, 180 folds total). The strict constraint reduces mean AUC by 1.8 percentage points. Active direct terms drop from 2,171 (default) to 243 (strict) across 180 folds, an 89\% reduction. 
	
	\begin{table}[h]
		\centering\small
		\caption{Strict source-disjointness analysis over 15 outer splits. Default RPTE is compared with the independently nested-CV-retuned strict variant.}
		\label{tab:strict}
		\renewcommand{\arraystretch}{1.08}
		\begin{tabularx}{\textwidth}{@{}l *{6}{>{\centering\arraybackslash}X}@{}}
			\toprule
			Dataset & AUC & AUC & $\Delta$AUC & MIU & MIU & $\Delta$MIU \\
			& default & strict & & default & strict & \\
			\midrule
			lupus     & 0.768 & 0.750 & $-0.018$ &  10 &  11 & $+1$ \\
			postop.   & 0.432 & 0.454 & $+0.022$ &  16 &  13 & $-3$ \\
			append.   & 0.797 & 0.751 & $-0.046$ &  20 &  16 & $-4$ \\
			corral    & 1.000 & 1.000 & 0.000  &  18 &  10 & $-8$ \\
			crabs     & 0.974 & 0.927 & $-0.047$ &  39 &  40 & $+1$ \\
			hepatitis & 0.790 & 0.788 & $-0.002$ &  40 &  34 & $-5$ \\
			biomed    & 0.927 & 0.913 & $-0.014$ &  35 &  29 & $-6$ \\
			haberman  & 0.669 & 0.645 & $-0.024$ &  36 &  31 & $-5$ \\
			heart-c   & 0.903 & 0.872 & $-0.031$ &  75 &  54 & $-20$ \\
			saheart   & 0.728 & 0.681 & $-0.047$ &  91 &  70 & $-21$ \\
			WDBC      & 0.987 & 0.981 & $-0.006$ &  46 &  46 & $0$ \\
			EEG seizure & 0.990 & 0.989 & $-0.001$ & 442 & 375 & $-68$ \\
			\midrule
			Mean      & 0.830 & 0.813 & $-.018$ &  72 &  61 & $-11$ \\
			\bottomrule
		\end{tabularx}
	\end{table}
	
	Panel-level Wilcoxon signed-rank tests on the 12 paired dataset means confirm that the AUC reduction under the strict constraint is statistically significant (AUC: $W = 6$, $p = 0.014$, Holm-corrected $p = 0.024$; balanced accuracy: $W = 4$, $p = 0.007$, Holm-corrected $p = 0.021$; Brier score: $W = 8$, $p = 0.012$, Holm-corrected $p = 0.024$). The additional term level strict constraint is therefore best suited to deployment scenarios where complete variable ownership (or reduced audit complexity) is desired and a modest accuracy cost is acceptable.

	\subsection{Ablation experiments}
	\label{sec:ablation}

	\begin{table}[t]
		\centering\small
		\caption{Ablation analysis across eight configurations and 12 datasets: mean ROC~AUC, $\Delta$AUC relative to full RPTE, and audit-load metrics.}
		\label{tab:ablation}
		\renewcommand{\arraystretch}{1.12}
		\setlength{\tabcolsep}{3pt}
		\begin{tabularx}{\textwidth}
			{@{}l *{4}{>{\centering\arraybackslash}X}@{}}
			\toprule
			Configuration & AUC & $\Delta$AUC & MIU & IIU \\
			\midrule
			\multicolumn{5}{@{}l}{\textit{RPTE estimator}} \\[2pt]
			Full RPTE (HUG + augmented pairs)
			& \textbf{0.830} & --       & 72  & 30.0 \\
			HUG, no augmentation
			& 0.819          & $-$0.011 & 62  & 18.8 \\
			Frequent top-$K$ vocabulary
			& 0.825          & $-$0.005 & 131 & 25.7 \\
			Original features only
			& 0.805          & $-$0.025 & 127 & 15.8 \\
			No leaf reassignment
			& 0.809          & $-$0.022 & 120 & 12.6 \\
			\midrule
			\multicolumn{5}{@{}l}{\textit{Fixed XGBoost on constructed vocabulary}} \\[2pt]
			HUG vocabulary + XGBoost
			& \textbf{0.832} & $+$0.001 & 1{,}540 & 264.3 \\
			HUG, no augmentation + XGBoost
			& 0.828          & $-$0.002 & 1{,}661 & 273.3 \\
			Frequent vocabulary + XGBoost
			& 0.827          & $-$0.003 & 1{,}595 & 267.5 \\
			\bottomrule
		\end{tabularx}
		
		\vspace{2pt}
		\begin{minipage}{\textwidth}
			\footnotesize
			AUC differences are calculated from the unrounded dataset-panel means. 
		\end{minipage}
	\end{table}
	
	\begin{table}[t]
		\centering\small
		\caption{Panel-level Wilcoxon signed-rank tests for the nine prespecified AUC contrasts. Holm correction is applied across all nine tests.	Record denotes wins--ties--losses for the first-named configuration.}
		\label{tab:ablation_wilcoxon}
		\renewcommand{\arraystretch}{1.08}
		\setlength{\tabcolsep}{4pt}
		\begin{tabularx}{\textwidth}
			{@{}X *{4}{>{\centering\arraybackslash}p{1.6cm}}@{}}
			\toprule
			Comparison & $W$ & $p$ & $p_{\mathrm{Holm}}$ & Record \\
			\midrule
			Full RPTE vs original only
			& 8 & 0.0122 & 0.0977 & 11--0--1 \\
			
			Full RPTE vs no reassignment
			& 5 & 0.0098 & 0.0879 & 10--1--1 \\
			
			Full RPTE vs HUG no augmentation
			& 23 & 0.2334 & 0.9336 & 8--0--4 \\
			
			HUG no augmentation vs frequent RPTE
			& 33 & 0.6772 & 1.000 & 7--0--5 \\
			
			Frequent RPTE vs frequent XGBoost
			& 15 & 0.1230 & 0.6152 & 2--1--9 \\
			
			HUG no-augmentation RPTE vs corresponding XGBoost
			& 16 & 0.0771 & 0.4629 & 2--0--10 \\
			
			Full RPTE vs HUG-vocabulary XGBoost
			& 11 & 0.0537 & 0.3760 & 1--1--10 \\
			
			HUG-vocabulary XGBoost vs HUG no-augmentation XGBoost
			& 29 & 0.7646 & 1.000 & 4--1--7 \\
			
			HUG no-augmentation XGBoost vs frequent XGBoost
			& 27 & 0.6377 & 1.000 & 6--1--5 \\
			\bottomrule
		\end{tabularx}
	\end{table}
	
	To isolate the contribution of each architectural choice, we conduct a systematic eight-scenario ablation using the complete \(3\times5\) repeated outer-CV protocol (12 datasets, 15 splits each, with identical folds and seeds). Five configurations evaluate RPTE under different vocabulary and reconstruction choices. Three additional configurations replace RPTE with fixed-hyperparameter XGBoost while retaining the vocabulary produced by the corresponding RPTE configuration, thereby comparing downstream learners on matched representations. Table~\ref{tab:ablation} reports the descriptive results, while Table~\ref{tab:ablation_wilcoxon} reports the nine prespecified panel-level comparisons. Per-dataset results appear in Appendix~\ref{app:ablation_detail}.	Three principal findings emerge from the RPTE configurations. First, Stage~1 vocabulary construction makes an important contribution to predictive accuracy: replacing the supervised vocabulary with original features reduces mean AUC by 2.5 percentage points, with full RPTE outperforming the original-only variant on 11 of 12 datasets. This difference is nominally significant but does not cross the \(5\%\) threshold after Holm correction across the nine contrasts (\(p_{\mathrm{Holm}}=.0977\)). Second, the HUG no-augmentation vocabulary produces a substantially more compact representation than the independently constructed frequent vocabulary. No significant AUC difference is detected between these configurations (0.819 vs.\ 0.825; \(p_{\mathrm{Holm}}=1.0\)), whereas HUG reduces mean MIU from 131 to 62, a \(2.1\times\) reduction that remains significant after correction (\(p_{\mathrm{Holm}}=.005\)). Third, leaf reassignment and the Stage~3 global refit improve mean AUC by 2.2 percentage points relative to retaining the Stage~2 leaf values, with full RPTE winning on 10 datasets, tying on one, and losing on one. The XGBoost configurations show that the constructed vocabularies can also transfer predictive information to an opaque learner. Their mean AUCs range from 0.827 to 0.832 and do not differ significantly from their prespecified RPTE counterparts after correction. However, they require approximately \(21\)--\(23\times\) the mean MIU and about \(9\times\) the mean IIU of full RPTE. These results indicate that vocabulary construction alone does not explain RPTE’s low audit load. The observed compactness is consistent with the combined effect of source-disjoint tree construction and Stage~3 global coefficient estimation.

	\subsection{Representation refinement and transfer}
	\label{sec:representation}
	
	The ablation demonstrates that HUG and frequent vocabularies achieve comparable AUC when paired with the same RPTE estimator. We investigate whether this performance parity arises because the two vocabularies produce similar data-level representations, or because the RPTE absorbs vocabulary differences during tree construction.
	
	\textit{Representation refinement.}
	We measure linear centered kernel alignment (CKA; \citealp{kornblith2019similarity}) between the HUG and frequent vocabularies at two stages. Stage-1 CKA compares only the binary mined-pattern matrices (excluding shared originals, augmented pairs, and labels). Pre-LR CKA compares the full design matrices supplied to Stage~3 logistic regression, including terminal-leaf indicators and eligible direct terms. On the 11 datasets where both vocabularies produce binary patterns, mean Stage-1 CKA is $0.554$. After RPTE tree construction, mean pre-LR CKA rises to $0.694$, an increase of $+0.140$ observed on all 11 eligible datasets. Fig.~\ref{fig:representation_refinement} shows the per-dataset values. The RPTE therefore absorbs a substantial portion of the initial vocabulary variation during structure discovery, bringing the two representations closer together before the final estimation step. The remaining CKA gap indicates that the representations are not identical; the HUG vocabulary retains a structurally distinct encoding that produces the same predictive performance at lower audit cost.
		
	\textit{Vocabulary transfer to XGBoost.}
	To assess whether the benefits of the HUG vocabulary extend beyond the 	RPTE estimator, we compare fixed-hyperparameter XGBoost trained on original features with the same XGBoost trained on the full HUG vocabulary
	(Table~\ref{tab:ablation}, bottom panel; Fig.~\ref{fig:vocabulary_transfer}). The two configurations achieve comparable observed AUC ($0.832$ vs.\ $0.824$, $p_{\mathrm{Holm}}=1.0$), while HUG-vocabulary XGBoost reduces MIU on 11 of 12 datasets and IIU on 9 of 12. 
	
	\begin{figure}[h]
		\includegraphics[width=0.7\textwidth,height=7cm]
		{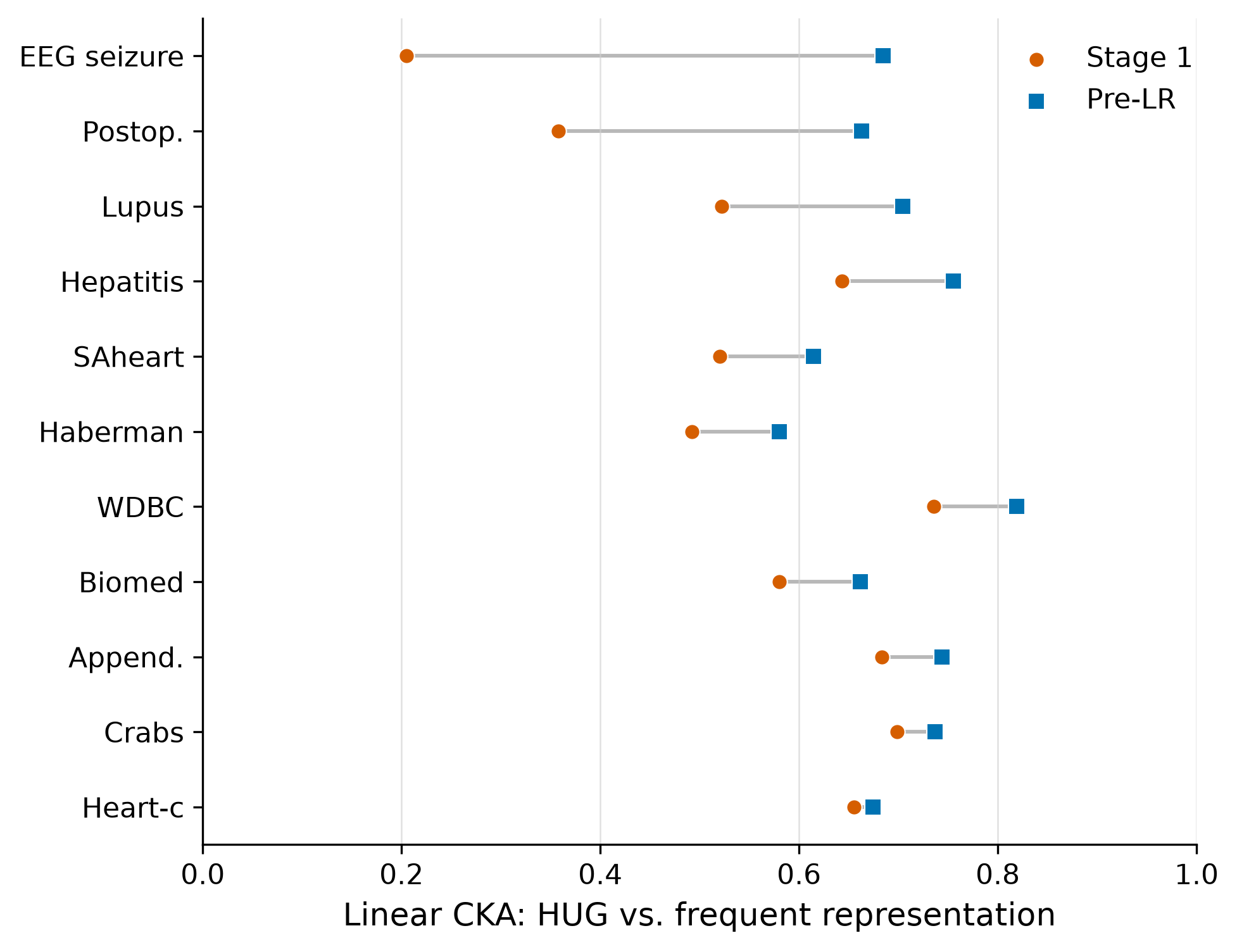}
		\caption{Linear CKA between HUG and frequent representations at Stage~1
			(mined-pattern matrices) and immediately before logistic-regression fitting
			(full RPTE design matrices).}
		\label{fig:representation_refinement}
	\end{figure}
	
	\begin{figure}[h]
		\includegraphics[width=0.7\textwidth,height=7cm]
		{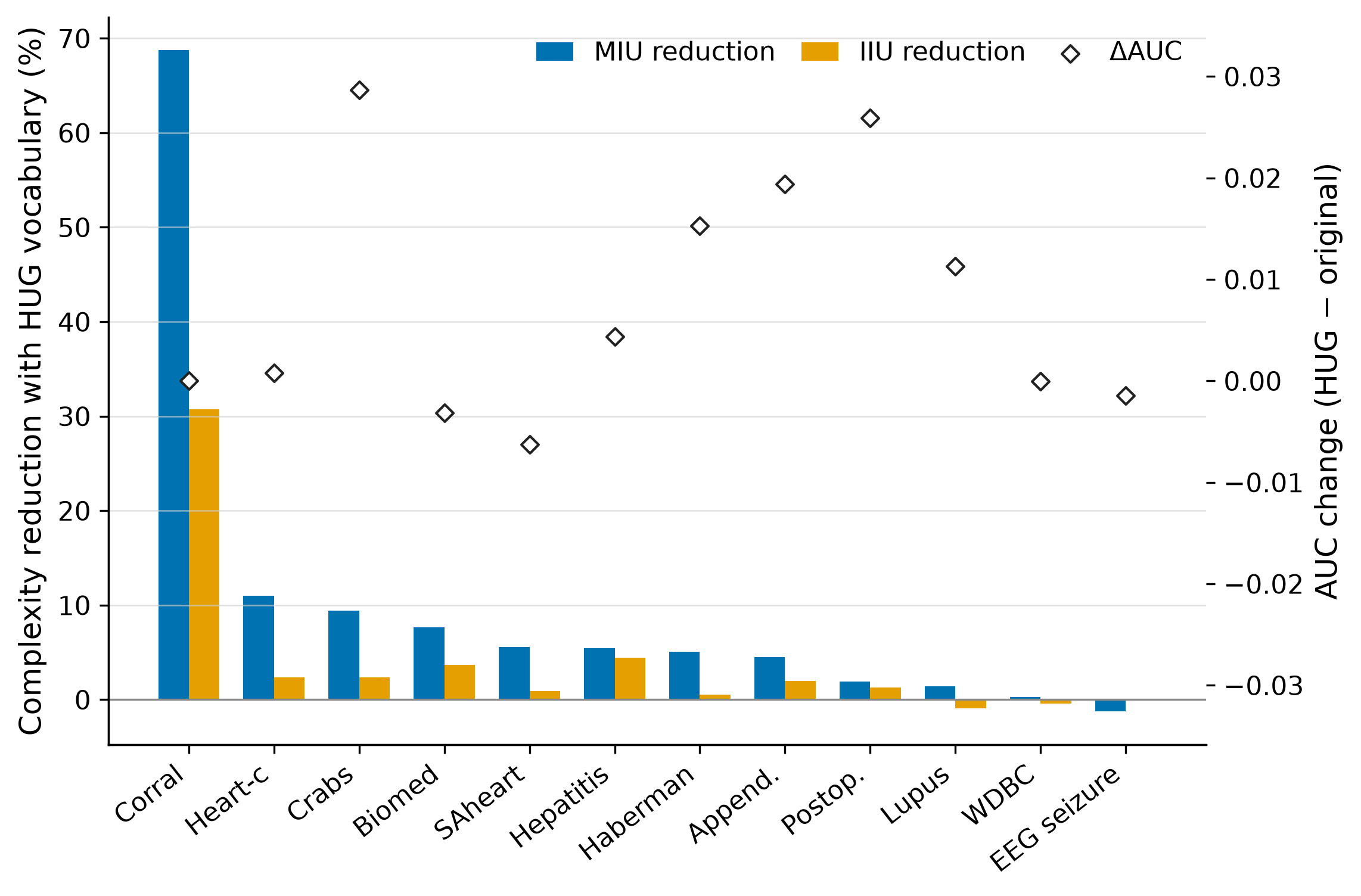}
		\caption{Impact of HUG vocabulary transfer on XGBoost.}
		\label{fig:vocabulary_transfer}
	\end{figure}
	
	A panel-level Wilcoxon test for MIU yields $p=.003$ ($p_{\mathrm{Holm}}=.068$); the corresponding IIU test yields $p=.009$ ($p_{\mathrm{Holm}}=.158$). Thus, the vocabulary primarily provides directional evidence of improved representational simplicity rather than a statistically confirmed improvement in predictive accuracy. The lower MIU and IIU are consistent with simpler fitted XGBoost structures when pre-composed patterns capture relevant interactions.

	\subsection{Execution time performance}
	
	Table~\ref{tab:timing} reports mean wall-clock execution times across all 12 datasets. RPTE tuning time is comparable to XGBoost and RF and substantially faster than EBM and RuleFit. EBM tuning averages 382 seconds due to iterative boosting over per-feature shape functions. RuleFit tuning averages 112 seconds.
	
	\begin{table}[h]
		\centering\small
		\caption{Execution time: mean (SD) of the 12 dataset-level mean times, in seconds.}
		\label{tab:timing}
		\renewcommand{\arraystretch}{1.08}
		\begin{tabularx}{\textwidth}{@{}l *{6}{>{\centering\arraybackslash}X}@{}}
			\toprule
			& RPTE & XGBoost & LightGBM & RF & EBM & RuleFit \\
			\midrule
			Tuning   & 26.2\,(62.4) & 16.0\,(29.2) & 11.7\,(36.8) & 9.7\,(17.3) & 382.1\,(688.4) & 112.2\,(111.5) \\
			Fit      & 1.987\,(5.431) & 0.783\,(1.510) & 0.464\,(1.452) & 0.727\,(1.786) & 24.551\,(52.971) & 6.009\,(9.672) \\
			Predict  & 0.019\,(0.041) & 0.003\,(0.003) & 0.005\,(0.013) & 0.012\,(0.009) & 0.004\,(0.012) & 0.090\,(0.049) \\
			\bottomrule
		\end{tabularx}
	\end{table}

	\section{Discussion, Implications, and Limitations}
	\label{sec:discussion}
	\subsection{Predictive accuracy under structural constraints (RQ1)}
	
	RPTE achieves a mean AUC of \(0.830\), \(0.8\) percentage points below the best opaque baseline, XGBoost (\(0.838\)). EBM leads with \(0.853\), while RuleFit has a similar panel mean of \(0.831\). Panel-level Wilcoxon tests across the five pre-specified RPTE-versus-baseline comparisons (Table~\ref{tab:wilcoxon}) show that RPTE performs significantly below XGBoost and EBM after Holm correction (\(p_{\mathrm{Holm}}=.005\) and \(.039\), respectively). Its differences from LightGBM, RF, and RuleFit are not significant (\(p_{\mathrm{Holm}}=.442\), \(.467\), and \(.700\)). RPTE nevertheless achieves perfect AUC on corral. On EEG-Seizure (\(n=11{,}500\), \(p=178\)), it achieves \(0.990\), only \(0.006\), or \(0.6\) AUC percentage points, below LightGBM. Overall, RPTE trades some predictive capacity for a substantially smaller, source-disjoint representation. It delivers competitive performance with markedly lower model-level audit complexity.
	
	\subsection{Audit-load reduction (RQ2)}
	
	The reduction in audit complexity is substantial against opaque ensembles and EBM. The XGBoost-to-RPTE MIU ratio ranges from 9$\times$ to 87$\times$ across datasets. RPTE maintains lower MIU than EBM on all 12 datasets. EBM's audit load grows with the number of features: on WDBC (30 features), EBM MIU reaches 12,593; on EEG-Seizure (178 features), it reaches 145,155, exceeding even RF (51,150). This occurs because EBM represents each feature with a shape function containing many bins, and pairwise interactions add further terms.
	
	The comparison with RuleFit is more nuanced. RuleFit achieves comparable aggregate MIU (mean 78 vs. 72) and lower MIU on three datasets (heart-c, saheart, EEG-Seizure). The key difference is structural rather than quantitative: RuleFit does not enforce source disjointness, so its rules can share variables, leading to the overlapping effects documented in Appendix~\ref{app:critique}. A RuleFit model with MIU = 64 may activate 20 rules per prediction with shared variables and conflicting coefficient signs, whereas an RPTE model with MIU = 75 produces non-overlapping contributions that can be read independently.
	
	We note that MIU and IIU measure condition counts rather than cognitive difficulty. Auditing a model with MIU = 75 is qualitatively different from reviewing one with MIU = 1,247: the former can be read entirely, while the latter requires automated tooling.
	
	\subsection{Interpretability analysis: Discovered patterns and domain alignment}
	\label{sec:interpretability}
	
	A central premise of this work is that RPTE discovers meaningful structure. We assess this by examining rules involving more than two raw variables that are selected across 15 outer folds in three datasets. Table~\ref{tab:patterns} summarizes the top recurring pattern signatures. Each dataset's pattern signatures are organized by raw-variable sets: a signature such as \{cp, oldpeak, thal\} means a rule whose split conditions involve exactly those three variables. The "Folds" column counts the number of folds in which that specific variable combination appears as a prediction-active rule. The "Domain concepts" column identifies the clinical concept families that the selected variables span. 
	
	\begin{table}[b]
		\centering\small
		\caption{Top recurring RPTE pattern signatures and domain alignment. Folds = number of folds (out of 15) in which the exact variable combination is prediction-active.}
		\label{tab:patterns}
		\renewcommand{\arraystretch}{1.12}
		\setlength{\tabcolsep}{3pt}
		\begin{tabularx}{\textwidth}{@{}l l c X@{}}
			\toprule
			Dataset & Pattern signature & Folds & Domain concepts matched \\
			\midrule
			\multirow{3}{*}{Heart-c}
			& \{cp, oldpeak, thal\}              & 4/15 & Ischaemia + anatomical evidence \\
			& \{ca, cp, oldpeak, sex, slope\}     & 3/15 & Ischaemia + anatomical + sex \\
			& \{ca, cp, oldpeak, thal, thalach, trestbps\} & 2/15 & Ischaemia + anatomical + baseline \\
			\midrule
			\multirow{3}{*}{WDBC}
			& \{mean\_concavity, worst\_area\}          & 3/15 & Nuclear size + boundary \\
			& \{worst\_concave\_pts, worst\_perim., worst\_texture\} & 3/15 & Nuclear size + boundary \\
			& \{mean\_concave\_pts, worst\_perim.\}     & 3/15 & Nuclear size + boundary \\
			\midrule
			\multirow{3}{*}{SA Heart}
			& \{Age, Famhist, Ldl, Tobacco, Typea\} & 10/15 & Lipid/tobacco + age/family \\
			& \{Age, Famhist, Ldl, Tobacco\}         &  8/15 & Lipid/tobacco + age/family \\
			& \{Age, Famhist, Ldl\}                  &  5/15 & Lipid + age/family \\
			\bottomrule
		\end{tabularx}
	\end{table}
	
	\textit{Heart-c: cardiac risk pathways.}
	Across all 15 folds, every RPTE model selects rules involving features from at least two of three clinically recognized cardiac pathways: exercise/ischaemia response (oldpeak, thal, exang, slope, thalach), anatomical/symptom evidence (cp, ca, thal), and baseline risk (age, trestbps, chol, fbs). At the concept level, all three pathway families are represented in all 15 folds. At the signature level, the most frequent exact variable combination is \{cp, oldpeak, thal\}, appearing in 4 of 15 folds. Different folds produce rules that span the same clinical concepts but use different variable subsets. Active-tree stability of 0.74 and prediction Spearman of 0.93 indicate that the overall tree architecture and risk ranking are highly consistent across folds, even as the specific variable combinations vary.
	
    \textit{WDBC: breast cancer morphology.}
	All 15 folds produce rules involving features from two diagnostic dimensions: nuclear size (radius, perimeter, area) and nuclear boundary irregularity (concavity, concave points, compactness). Both concepts recur at the concept level in all 15 folds. Individual signature frequencies are lower (3/15) because correlated size features (worst\_radius, worst\_perimeter, worst\_area) are interchangeable representations, and the Stage~1 redundancy filter retains different representatives in different folds.
	
	\textit{SA Heart: coronary risk factors.}
	RPTE discovers a dominant signature \{Age, Famhist, Ldl, Tobacco, Typea\} in 10 of 15 folds, unifying two established coronary risk dimensions: lipid/tobacco exposure and age/family-history risk. Active-tree stability of 0.72 and prediction Spearman of 0.82 indicate a consistent decision architecture.

	\subsection{Alternate explanations through Rashomon analysis}
	\label{sec:rashomon}
	
	An important question for auditable models is whether the discovered structure is the only near-optimal explanation. We investigate this through two complementary analyses.
	
	\paragraph{Cross-fold alternate explanations.}
	Two rules from different folds are coverage-equivalent if they identify the same observations (Jaccard $\ge 0.75$) with the same coefficient sign, even when built from different features. Representative pairs are drawn from the 15 outer splits of the \(3\times5\) repeated-CV protocol; split identifiers run from 0 to 14 across repetitions. On WDBC, 733 direct and 172 complement pairs are found; on heart-c, 32 pairs. Table~\ref{tab:crossfold} shows representative examples, with the features that differ between Rule~A and Rule~B highlighted in \textbf{bold}.
	
	\begin{table}[h]
		\centering
		\caption{Cross-fold alternate explanation pairs. Each row shows two rules from different folds that identify nearly the same patient subgroup (coverage Jaccard $\ge 0.90$) using different features. Features unique to each rule are in \textbf{bold}.}
		\label{tab:crossfold}
		\renewcommand{\arraystretch}{1.15}
		\resizebox{\textwidth}{!}{%
			\begin{tabular}{@{}p{1.2cm}cp{6.5cm}p{6.5cm}c@{}}
				\toprule
				Dataset & Folds & Rule A & Rule B & Cov.~J \\
				\midrule
				\multirow{2}{*}{WDBC}
				& 0 vs 9
				& worst\_perim $\le$ 106.0 AND worst\_concave\_pts $\le$ 0.135 AND \textbf{radius\_err} $\le$ 0.596\newline
				{\scriptsize 3 features, coeff $= -2.52$}
				& worst\_perim $\le$ 106.0 AND \textbf{worst\_compact} $\times$ \textbf{mean\_area} $\le$ 201.0 AND \textbf{area\_err} $\le$ 46.9 AND worst\_concave\_pts $\le$ 0.135\newline
				{\scriptsize 5 features, coeff $= -1.64$}
				& .987 \\
				\cmidrule{2-5}
				& 5 vs 14
				& \textbf{worst\_radius} $\le$ 16.8 AND worst\_concave\_pts $\le$ 0.136 AND area\_err $\le$ 38.6 AND smooth\_err $>$ 0.003\newline
				{\scriptsize 4 features, coeff $= -2.84$}
				& \textbf{worst\_perim} $\le$ 111.5 AND worst\_concave\_pts $\le$ 0.136 AND area\_err $\le$ 38.6 AND smooth\_err $>$ 0.003\newline
				{\scriptsize 4 features, coeff $= -2.43$}
				& .984 \\
				\midrule
				\multirow{2}{*}{Heart-c}
				& 1 vs 4
				& $|$cp $-$ thal$|$ $>$ 2.5 AND oldpeak $>$ 0.65\newline
				{\scriptsize 3 features, coeff $= -1.57$}
				& cp$=$[0,1), oldpeak$=$[0.8, 6.2) AND \textbf{trestbps} $\times$ thal $>$ 290\newline
				{\scriptsize 4 features, coeff $= -1.42$}
				& .918 \\
				\cmidrule{2-5}
				& 9 vs 11
				& cp $-$ oldpeak $> -0.45$ AND ca $\le$ 0.5 AND {chol} $\times$ thal $\le$ 628.5\newline
				{\scriptsize 5 features, coeff $= +1.20$}
				& cp $-$ oldpeak $> -0.45$ AND ca $\le$ 0.5 AND \textbf{trestbps} $\times$ thal $\le$ 317.5 AND {chol} $\le$ 323\newline
				{\scriptsize 6 features, coeff $= +1.74$}
				& .902 \\
				\bottomrule
		\end{tabular}}
	\end{table}
	
	On WDBC, the first pair identifies nearly the same subgroup (coverage Jaccard \(=0.987\)) despite providing substantially different explanations: both rules retain worst perimeter and worst concave points, but one uses radius error whereas the other uses area error and a worst-compactness–mean-area product. The second pair replaces worst radius with worst perimeter while retaining the other conditions and achieves a coverage Jaccard of \(0.984\), illustrating how correlated tumor-size measures can support alternative explanations of nearly the same subgroup. On heart-c, different arithmetic and threshold conditions similarly provide alternative explanations that identify nearly the same patients and have the same coefficient direction.
	
	\paragraph{Same-fold alternate explanations.}
	To examine whether RPTE can produce different explanations under a fixed data partition, we refit the model after removing one raw feature at a time. Candidates are screened using three-fold out-of-fold predictions within the outer-training fold, and at most five alternatives per dataset are evaluated on the common held-out fold. All ten selected alternatives preserved held-out discrimination, with an AUC loss of at most \(0.01\). Three of five heart-c alternatives also met the joint prediction-agreement criteria, whereas the WDBC alternatives preserved AUC but showed greater changes in risk ranking and top-risk membership. Table~\ref{tab:samefold_rules} compares prediction-active leaf rules from the reference and alternative models. 
	
	\begin{table}[h]
		\centering
		\caption{Same-fold alternate explanations. Reference and alternative models are trained on the same outer-training fold using different raw feature sets. Differing conditions are shown in \textbf{bold}.}
		\label{tab:samefold_rules}
		\renewcommand{\arraystretch}{1.15}
		\resizebox{\textwidth}{!}{%
			\begin{tabular}{@{}
					p{1.2cm}
					>{\raggedright\arraybackslash}p{2.2cm}
					>{\raggedright\arraybackslash}p{6.4cm}
					>{\raggedright\arraybackslash}p{6.4cm}
					c@{}}
				\toprule
				Dataset & Removed feature & Reference rule & Alternative rule & Cov.\ $J$ \\
				\midrule
				
				\multirow{2}{*}{WDBC}
				& \makecell[l]{\texttt{worst\_}\\\texttt{symmetry}}
				& worst\_perimeter $\le105.95$ AND
				worst\_concave\_points $\le0.13505$ AND
				\textbf{radius\_error $\le0.59555$}\newline
				{\scriptsize coefficient $=-2.52$}
				& worst\_perimeter $\le105.95$ AND
				worst\_concave\_points $\le0.13505$ AND
				\textbf{area\_error $\le44.13$}\newline
				{\scriptsize coefficient $=-2.45$}
				& 1.000 \\
				
				\cmidrule{2-5}
				
				& \makecell[l]{\texttt{worst\_}\\\texttt{perimeter}}
				& \textbf{worst\_perimeter $\le105.95$} AND
				worst\_concave\_points $\le0.13505$ AND
				\textbf{radius\_error $\le0.59555$}\newline
				{\scriptsize coefficient $=-2.52$}
				& \textbf{worst\_radius $\le16.805$} AND
				worst\_concave\_points $\le0.13505$ AND
				\textbf{area\_error $\le48.7$} AND
				\textbf{smoothness\_error $>0.003294$} AND
				\textbf{worst\_texture $\le33.27$}\newline
				{\scriptsize coefficient $=-2.77$}
				& .857 \\
				
				\midrule
				
				\multirow{2}{*}{Heart-c}
				& \texttt{fbs}
				& cp $-$ oldpeak $\le-0.7$ AND
				\textbf{thal $>2.5$}\newline
				{\scriptsize coefficient $=-1.90$}
				& cp $-$ oldpeak $\le-0.7$ AND
				\textbf{trestbps $\times$ thal $>315$}\newline
				{\scriptsize coefficient $=-1.93$}
				& .944 \\
				
				\cmidrule{2-5}
				
				& \texttt{exang}
				& cp $-$ oldpeak $\le-0.7$ AND thal $\le2.5$ AND
				\textbf{NOT(ca$=[1,4)$, cp$=[0,1)$)} AND
				trestbps $-$ thalach $\le-8.5$\newline
				{\scriptsize coefficient $=+0.17$}
				& cp $-$ oldpeak $\le-0.7$ AND thal $\le2.5$ AND
				\textbf{ca$=[0,1)$} AND
				trestbps $-$ thalach $\le-8.5$\newline
				{\scriptsize coefficient $=+0.46$}
				& 1.000 \\
				
				\bottomrule
		\end{tabular}}
	\end{table}
	
	The same-fold comparisons reveal both feature substitution and structural persistence. On WDBC, radius error and area error produce coverage-equivalent explanations when combined with the same perimeter and concavity conditions (\(J=1.000\)). When worst perimeter is removed, RPTE reconstructs the explanation using worst radius and related morphology features while retaining high subgroup agreement (\(J=.857\)). On heart-c, a thal threshold is replaced by a trestbps–thal product with closely aligned coverage and coefficient magnitude (\(J=.944\)); the second pair achieves exact coverage equivalence through an alternative ca condition (\(J=1.000\)).
	
	Unlike the cross-fold analysis, these comparisons hold the training and test partitions fixed. The observed alternatives therefore arise from fitting RPTE with different feature sets rather than from changes in sampled patients. This demonstrates that RPTE can expose multiple, directly comparable explanations for nearly the same patient groups while preserving effect direction and predictive behavior.

	\subsection{Practical implications and limitations}
	
	The results have practical implications for predictive modeling in regulated settings. On heart-c, RPTE has an MIU of approximately 75, making comprehensive human review more tractable. By comparison, XGBoost has an MIU of 1,247 on heart-c, while EBM reaches 145,155 on EEG-Seizure, making automated model-level inspection increasingly necessary. Other interpretable methods, including RuleFit, SIRUS, FIGS, and EBM, provide algebraically additive predictions. However, additivity alone does not ensure distinct explanatory components: rules or tree contributions may overlap, form nested conditions, or reuse the same variables across multiple terms. RPTE instead organizes predictions around source-disjoint trees, with each observation activating one leaf per tree. A prediction can therefore be traced exactly to a compact set of named leaf rules and direct terms.
	
	The ablation results (Section~\ref{sec:ablation}) provide guidance for deployment. Stage~1 vocabulary construction contributes the largest accuracy gain and should not be omitted. The HUG-based vocabulary produces substantially more compact models than the independently constructed frequent vocabulary at comparable AUC, making it the preferred Stage~1 strategy when audit load matters. Source disjointness imposes a modest accuracy cost for a substantial reduction in audit load, a tradeoff favorable in most regulated settings. The representation analysis (Section~\ref{sec:representation}) confirms that the HUG vocabulary transfers structural benefits even to opaque learners: when supplied to a fixed-hyperparameter XGBoost, HUG reduces MIU on 11 of 12 datasets at comparable AUC. This indicates that the vocabulary captures genuine structural regularities rather than artifacts of the RPTE estimator. The strict source-disjointness analysis (Section~\ref{sec:strict}) further shows that complete variable ownership can be enforced at a mean cost of 1.8 AUC points, with an 89\% reduction in active direct terms.
	
	The proposed approach has several limitations that should be considered when interpreting the results. First, the statistical comparison is based on 12 dataset-level observations. After Holm correction across the five prespecified RPTE-versus-baseline tests, RPTE performs significantly below XGBoost and EBM (\(p_{\mathrm{Holm}}=.005\) and \(.039\)), while its differences from LightGBM, RF, and RuleFit are not significant.  Second, the same-fold Rashomon analysis examines a systematically generated, bounded set of alternatives rather than the complete Rashomon set, whose exhaustive enumeration would be combinatorially prohibitive. Third, MIU and IIU quantify structural audit load rather than human cognitive effort. Hence, user studies with domain experts are needed to establish how these reductions affect review time and decision quality. Finally, extensions to multi-class classification, survival analysis, and regression are natural next steps.
	
	\section{Conclusion and Future Research}
	\label{sec:conclusion}
	
	This paper introduced a three-stage learning approach that separates structure discovery from coefficient estimation in tabular classification. Coefficient reassignment and source-disjoint tree construction yield models that retain competitive predictive performance while substantially reducing audit complexity. The proposed approach achieves a mean ROC~AUC of 0.830 across twelve benchmarks, within 0.8 percentage points of the best opaque baseline, while reducing model inspection units by 9$\times$ to 87$\times$ relative to XGBoost and maintaining lower MIU than EBM on all 12 datasets. RuleFit achieves comparable or lower MIU on three datasets but without the source-disjointness guarantees. Five structural properties are formally established, providing the theoretical basis for audit-load guarantees. An eight-scenario ablation confirms that the supervised HUG vocabulary is the primary driver of both predictive accuracy and representational economy: it outperforms the original-features-only variant by 2.5 AUC points and produces $2.1\times$ lower MIU than an independently constructed frequent vocabulary at comparable AUC. The learned vocabulary also transfers to XGBoost, reducing MIU on 11 of 12 datasets at matched AUC, demonstrating that the vocabulary captures genuine structural regularities.
	
	The approach complements existing methods. Gradient-boosted ensembles remain the method of choice when maximum accuracy is the sole objective. EBMs provide an effective alternative when pairwise interactions suffice and per-feature shape functions are preferred. The three-stage approach targets scenarios where higher-order interactions matter and full per-prediction auditability is required.
	
	Several research directions follow from the modular design of RPTE. First, alternative Stage~1 vocabularies and structured Stage~3 penalties, including elastic net \citep{zou2005regularization} and group lasso, could clarify how representation and coefficient regularization interact. Second, scalable search procedures could characterize larger regions of the Rashomon set without requiring combinatorially infeasible exhaustive enumeration. Third, external, multi-site, and temporal validation would establish whether the discovered structures remain stable and transferable across populations. Finally, extensions that incorporate domain-specific constraints or prior knowledge could further improve the relevance and actionability of the resulting explanations.
	
	\bibliographystyle{apalike}
	\bibliography{AuditableClassifiers}
	
	
	\begin{appendices}

	\section{Interpretability Analysis of Additive Rule Methods}
	\label{app:critique}
	
	Individual rules in RuleFit, FIGS, and SIRUS are human-readable. However, the fitted rule collections can impose a substantial cognitive burden when a reviewer attempts to reconstruct any single prediction. This appendix examines three recurring patterns that widen the gap between rule-level readability and model-level interpretability. These are not mathematical errors. The concern is operational: how much mental work is required to trace a prediction.
	
	We refit all three methods on five datasets using the first outer fold (seed 42, stratified 5-fold CV). RuleFit uses the nested-CV best parameters; SIRUS uses 10 rules at depth 2; FIGS uses \texttt{max\_rules}$= 100$. For each fitted model, we extract all active rules and compute pairwise training-set coverage overlaps. Table~\ref{tab:critique} summarizes the results.
	
	\begin{enumerate}[label=(\arabic*),leftmargin=2.5em,itemsep=2pt]
		\item \textbf{Conflicting overlapping effects.}
		When rules on the same features fire simultaneously with opposite signs, no single rule can be read as the model's conclusion about those features.
		
		\item \textbf{Nested threshold families.}
		When multiple rules use the same variables at different thresholds and one rule's coverage is a subset of another's, their contributions accumulate with no single clear boundary.
		
		\item \textbf{Activation multiplicity.}
		The number of rules firing simultaneously per observation determines the cognitive load of tracing any prediction.
	\end{enumerate}
	
	\newcommand{\rr}[1]{{\small\texttt{#1}}}
	
	\begin{table}[H]
		\centering
		\caption{Interpretability issues observed in fitted additive rule models
			(outer fold 0, seed 42). For SIRUS, contrast denotes the true-branch output minus the false-branch output.}
		\label{tab:critique}
		\renewcommand{\arraystretch}{1.08}
		\setlength{\tabcolsep}{3pt}
		
		\begin{tabular}{@{}p{2.8cm}p{13.3cm}@{}}
			\toprule
			\multicolumn{2}{@{}l}{\textbf{RuleFit}} \\
			\midrule
			
			Conflicting overlap
			&
			\texttt{prnn\_crabs}:
			\rr{\texttt{index}$\leq23.5$, \texttt{FL}$\leq17.3$,
				\texttt{BD}$\leq11.45$}
			(coef.\ $=-1.23$, train $=38$)
			and
			\rr{\texttt{index}$\leq26.0$, \texttt{FL}$\leq17.75$,
				\texttt{BD}$\leq11.3$}
			(coef.\ $=+0.17$, train $=38$).
			The rules overlap on 36 training and 10 held-out observations
			(train/test Jaccard $=.90/.91$), so both opposing contributions apply
			to nearly the same subgroup.
			\\
			
			Nested thresholds
			&
			\texttt{prnn\_crabs}: seven \{\texttt{FL}, \texttt{index}\} rules form
			10 nested pairs. Example:
			\rr{\texttt{index}$\leq23.5$, \texttt{FL}$\leq17.1$} ($-0.87$)
			within
			\rr{\texttt{index}$\leq24.5$, \texttt{FL}$\leq18.05$} ($-0.11$);
			train/test Jaccard $=.949/.833$.
			\\[3pt]
			
			Activation multiplicity
			&
			Mean active rules per held-out observation: $6.7$--$19.2$;
			maximum $=35$ (\texttt{prnn\_crabs}).
			\\
			
			\midrule
			\multicolumn{2}{@{}l}{\textbf{FIGS}} \\
			\midrule
			
			Conflicting overlap
			&
			Hepatitis:
			\rr{\texttt{BILIRUBIN}$\leq13.5$,
				\texttt{ALBUMIN}$\leq28.5$} (Tree 1, $+0.192$, $n=87$)
			and
			\rr{\texttt{BILIRUBIN}$\leq16.5$,
				\texttt{ALBUMIN}$\leq28.5$} (Tree 3, $-0.022$, $n=91$);
			train/test Jaccard $=.956/.958$.
			\\[3pt]
			
			Nested thresholds
			&
			Hepatitis:
			\rr{\texttt{ASCITES}$>0.5$, \texttt{BILIRUBIN}$>18$,
				$29<\texttt{SGOT}\leq82.5$}
			contains
			\rr{$0.5<\texttt{ASCITES}\leq1.5$, \texttt{BILIRUBIN}$>18$,
				$37.5<\texttt{SGOT}\leq81.5$};
			10 shared training observations, Jaccard $=.909$.
			\\[3pt]
			
			Activation multiplicity
			&
			One leaf is active per tree; hence active leaves per prediction equal the number of trees, ranging from 1 to 18 across the five fitted models.
			\\
			
			\midrule
			\multicolumn{2}{@{}l}{\textbf{SIRUS}} \\
			\midrule
			
			Conflicting overlap
			&
			Postoperative:
			\rr{\texttt{BP-STBL}$<1$}
			(contrast $=-.091$, train/test $=15/6$)
			within
			\rr{\texttt{BP-STBL}$<2$}
			(contrast $=+.048$, train/test $=53/13$);
			train/test Jaccard $=.283/.462$.
			\\[3pt]
			
			Nested thresholds
			&
			Appendicitis:
			\rr{\texttt{At7}$<.1986$}
			(contrast $=+.499$, train/test $=17/4$)
			within
			\rr{\texttt{At7}$<.2778$}
			(contrast $=+.412$, train/test $=25/8$);
			train/test Jaccard $=.680/.500$.
			\\[3pt]
			
			Activation multiplicity
			&
			All 10 rule components contribute through true/false outputs;
			mean true-branch memberships $=2.27$--$3.83$, maximum $=6$--$9$.
			\\
			
			\bottomrule
		\end{tabular}
	\end{table}
	
	\begin{minipage}{.95\textwidth}
		\textit{Note.}
		Shared variables do not necessarily imply redundant or conflicting
		explanations; threshold, direct, and interaction effects may remain
		meaningfully distinct. The examples above concern repeated subgroup
		descriptions with overlapping or nested thresholds. RPTE separates source
		variables across trees and activates exactly one leaf within each tree.
		The strict variant additionally constrains retained direct terms.
	\end{minipage}
	
	\section{Hyperparameter Configuration}
	\label{app:params}
	
	All models are tuned using inner three-fold stratified cross-validation
	within each outer-training fold. For every model, the Cartesian product
	of three binary choices produces eight candidate configurations. The
	candidate with the highest mean inner-fold ROC~AUC is refitted on the
	complete outer-training fold.
	
	\begin{table}[h]
		\centering\small
		\caption{Hyperparameter search spaces. Each model has eight candidate configurations evaluated by inner three-fold stratified cross-validation.}
		\label{tab:hyperparameters}
		\renewcommand{\arraystretch}{1.14}
		\setlength{\tabcolsep}{5pt}
		\begin{tabularx}{\textwidth}
			{@{}l
				>{\hsize=1.2\hsize\raggedright\arraybackslash}X
				>{\hsize=0.8\hsize\raggedright\arraybackslash}X@{}}
			\toprule
			Method & Tuned parameters & Fixed settings \\
			\midrule
			RPTE
			& {
				$L\in\{1,2\}$, $\mathrm{topK}\in\{50,100\}$,$G\in\{0.01,0.001\}$,
				Sequential RPTE estimator
			}
			& Package defaults \\
			
			XGBoost
			&
			$\mathrm{max\_depth}\in\{2,4\}$;\;
			$\mathrm{learning\_rate}\in\{0.03,0.1\}$;\;
			$\mathrm{min\_child\_weight}\in\{1,5\}$
			&
			$\mathrm{reg\_lambda}=1$;\;
			$\mathrm{n\_estimators}=200$;\;
			early stopping after 20 rounds
			\\
			
			LightGBM
			&
			$\mathrm{learning\_rate}\in\{0.03,0.1\}$;\;
			$\mathrm{num\_leaves}\in\{15,31\}$;\;
			$\mathrm{min\_child\_samples}\in\{10,30\}$
			&
			$\mathrm{reg\_lambda}=1$;\;
			$\mathrm{n\_estimators}=200$
			\\
			
			Random Forest
			&
			$\mathrm{n\_estimators}\in\{50,100\}$;\;
			$\mathrm{max\_depth}\in\{4,8\}$;\;
			$\mathrm{min\_samples\_leaf}\in\{1,5\}$
			&
			$\mathrm{max\_features}=\sqrt{p}$
			\\
			
			EBM
			&
			$\mathrm{max\_rounds}\in\{200,500\}$;\;
			$\mathrm{interactions}\in\{0,10\}$;\;
			$\mathrm{learning\_rate}\in\{0.015,0.05\}$
			&
			Package defaults
			\\
			
			RuleFit
			&
			$\mathrm{n\_estimators}\in\{50,100\}$;\;
			$\mathrm{max\_rules}\in\{30,60\}$;\;
			$\mathrm{tree\_size}\in\{3,4\}$
			&
			Package defaults
			\\
			\bottomrule
		\end{tabularx}
	\end{table}

	\section{Detailed Ablation Results}
	\label{app:ablation_detail}
	
	Table~\ref{tab:ablation_rpte_detail} reports per-dataset AUC and MIU
	for the five RPTE configurations. All entries are means over the 15
	outer splits of the $3\times5$ repeated-CV protocol.
	
	The HUG-no-augmentation configuration achieves the lowest MIU on 11 of 12 datasets. Full RPTE achieves the highest AUC on seven datasets, including its tie with frequent RPTE and no reassignment on corral. The frequent vocabulary has higher MIU than both HUG configurations on 11 datasets. Overall, the HUG-based representations are substantially more compact while retaining comparable predictive performance.
	
	\begin{table}[h]
		\centering\scriptsize
		\caption{Per-dataset ablation results:
			mean ROC~AUC and MIU over 15 outer splits. Bold denotes the highest AUC
			and lowest MIU within each dataset, determined from unrounded values.}
		\label{tab:ablation_rpte_detail}
		\renewcommand{\arraystretch}{1.08}
		\setlength{\tabcolsep}{2pt}
		\begin{tabularx}{\textwidth}
			{@{}l *{10}{>{\centering\arraybackslash}X}@{}}
			\toprule
			& \multicolumn{2}{c}{Full RPTE}
			& \multicolumn{2}{c}{HUG no aug.}
			& \multicolumn{2}{c}{Frequent}
			& \multicolumn{2}{c}{Original only}
			& \multicolumn{2}{c}{No reassign.} \\
			\cmidrule(lr){2-3}
			\cmidrule(lr){4-5}
			\cmidrule(lr){6-7}
			\cmidrule(lr){8-9}
			\cmidrule(lr){10-11}
			Dataset
			& AUC & MIU
			& AUC & MIU
			& AUC & MIU
			& AUC & MIU
			& AUC & MIU \\
			\midrule
			lupus
			& \textbf{0.768} & 10
			& 0.758 & \textbf{8}
			& 0.755 & 39
			& 0.767 & 36
			& 0.754 & 39 \\
			
			postop.
			& 0.432 & 16
			& \textbf{0.479} & \textbf{6}
			& 0.475 & 35
			& 0.467 & 35
			& 0.388 & 39 \\
			
			append.
			& \textbf{0.797} & 20
			& 0.780 & \textbf{16}
			& 0.780 & 60
			& 0.760 & 61
			& 0.756 & 53 \\
			
			corral
			& \textbf{1.000} & 18
			& 0.883 & 24
			& \textbf{1.000} & 13
			& 0.883 & 43
			& \textbf{1.000} & \textbf{13} \\
			
			crabs
			& \textbf{0.974} & 39
			& 0.934 & \textbf{31}
			& 0.909 & 86
			& 0.904 & 85
			& 0.928 & 55 \\
			
			hepatitis
			& 0.790 & 40
			& \textbf{0.822} & \textbf{37}
			& 0.802 & 92
			& 0.777 & 110
			& 0.808 & 116 \\
			
			biomed
			& 0.927 & 35
			& \textbf{0.928} & \textbf{29}
			& 0.922 & 106
			& 0.915 & 101
			& 0.924 & 87 \\
			
			haberman
			& 0.669 & 36
			& 0.669 & \textbf{25}
			& \textbf{0.682} & 59
			& 0.660 & 50
			& 0.634 & 50 \\
			
			heart-c
			& \textbf{0.903} & 75
			& 0.897 & \textbf{63}
			& 0.902 & 112
			& 0.874 & 97
			& 0.871 & 89 \\
			
			saheart
			& \textbf{0.728} & 91
			& 0.708 & \textbf{73}
			& 0.703 & 118
			& 0.684 & 101
			& 0.669 & 99 \\
			
			WDBC
			& 0.987 & 46
			& 0.984 & \textbf{43}
			& \textbf{0.988} & 189
			& 0.983 & 184
			& 0.984 & 232 \\
			
			EEG seizure
			& \textbf{0.990} & 442
			& 0.987 & \textbf{384}
			& 0.987 & 661
			& 0.986 & 619
			& 0.988 & 567 \\
			\midrule
			Mean
			& \textbf{0.830} & 72
			& 0.819 & \textbf{62}
			& 0.825 & 131
			& 0.805 & 127
			& 0.809 & 120 \\
			\bottomrule
		\end{tabularx}
	\end{table}

\end{appendices}	
\end{document}